\documentclass{article}

\usepackage{microtype}
\usepackage{graphicx}
\usepackage{subcaption}
\usepackage{booktabs} 
\usepackage{wrapfig}
\usepackage{float}

\usepackage{hyperref}
\usepackage{enumitem}

\usepackage[accepted]{icml2026}

\usepackage{amsmath}
\usepackage{amssymb}
\usepackage{mathtools}
\usepackage{amsthm}

\usepackage{multirow}
\usepackage{multicol}

\usepackage[capitalize,noabbrev]{cleveref}

\theoremstyle{plain}
\newtheorem{theorem}{Theorem}[section]
\newtheorem{proposition}[theorem]{Proposition}

\theoremstyle{definition}

\theoremstyle{remark}
\newtheorem{remark}[theorem]{Remark}

\def\calA{\mathcal{A}}

\def\calR{\mathcal{R}}

\def \bR {\mathbb{R}}
\def \bP {\mathbb{P}}
\def \bE {\mathbb{E}}

\def \bN {\mathbb{N}}
\def \bKL {\mathsf{KL}}

\icmltitlerunning{Weak Diffusion Priors Can Still Achieve Strong Inverse-Problem Performance}

\begin{document}

\twocolumn[
  \icmltitle{Weak Diffusion Priors Can Still Achieve Strong Inverse-Problem Performance
  }

  \icmlsetsymbol{equal}{*}

  \begin{icmlauthorlist}
    \icmlauthor{Jing Jia}{equal,yyy}
    \icmlauthor{Wei Yuan}{equal,xxx}
    \icmlauthor{Sifan Liu}{zzz}
    \icmlauthor{Liyue Shen}{www}
    \icmlauthor{Guanyang Wang}{xxx}

  \end{icmlauthorlist}

  \icmlaffiliation{yyy}{Department of Computer Science, Rutgers University, Piscataway, United States}
    \icmlaffiliation{xxx}{Department of Statistics, Rutgers University, Piscataway, United States}
  \icmlaffiliation{zzz}{Department of Statistical Science, Duke University, Durham, United States}
  \icmlaffiliation{www}{Department of EECS, University of Michigan, Ann Arbor, United States}

  \icmlcorrespondingauthor{Jing Jia}{jing.jia@rutgers.edu}
  \icmlcorrespondingauthor{Guanyang Wang}{guanyang.wang@rutgers.edu}

  \icmlkeywords{Machine Learning, ICML}

  \vskip 0.3in
  \begin{center}
     \centerline{
    
    \includegraphics[width=\linewidth]{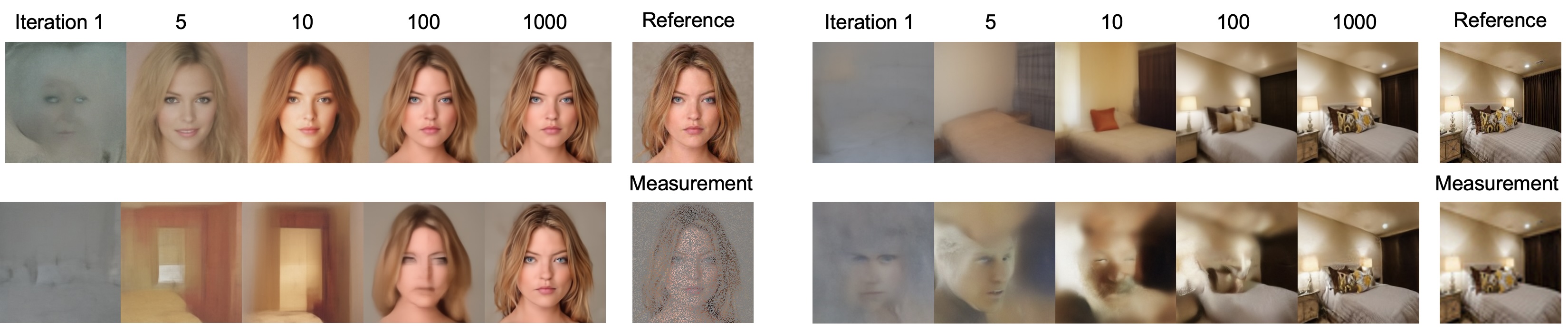}}
    \captionof{figure}{Image reconstruction using only a 3-step DDIM generator as the prior. Left block: reconstruction of a face image; right block: reconstruction of a bedroom image. In each block, the top row uses a matched prior, while the bottom row uses a mismatched prior. Diffusion models trained on bedroom images can still reconstruct face images, and diffusion models trained on face images can still reconstruct bedroom images. From left to right, we show intermediate reconstructions over optimization iterations. The Reference column shows the clean image, and the Measurement column shows the noisy observation.}
    \label{fig:teaser}
  \end{center}

]

\printAffiliationsAndNotice{\icmlEqualContribution}  

\begin{abstract}
Can a diffusion model trained on bedrooms recover human faces? Diffusion models are widely used as priors for inverse problems, but standard approaches usually assume a high-fidelity model trained on data that closely match the unknown signal. In practice, one often must use a mismatched or low-fidelity diffusion prior. Surprisingly, these weak priors often perform nearly as well as full-strength, in-domain baselines. We study when and why inverse solvers are robust to weak diffusion priors. Through extensive experiments, we find that weak priors succeed when measurements are highly informative (e.g., many observed pixels), and we identify regimes where they fail. To explain this behavior, we combine Bayesian-consistency theory with local-correlation analysis: the theory gives conditions under which high-dimensional measurements make the posterior concentrate near the true signal, while the correlation analysis shows that weak and stronger natural-image priors can share similar local spatial structure. These results provide a principled justification on when weak diffusion priors can be used reliably. Code is available at \href{https://github.com/jjia131/weak-diffusion-priors-inverse-problem}{this repository}.
\end{abstract}

\section{Introduction}

Inverse problems aim to recover an unknown signal from noisy measurements. Recently, diffusion models have emerged as powerful \textit{data-driven priors} for inverse problems, achieving state-of-the-art performance across a wide range of tasks in computer vision and related fields. The standard paradigm employs the same ``full-strength” diffusion model as the prior: for example, a $1000$-step DDPM \cite{ho2020denoising, song2020score} trained on a face dataset to recover human faces. Building on this, existing solvers integrate the diffusion model directly into reconstruction by injecting measurement information throughout a reverse diffusion chain to enforce consistency with the observation \citep{song2020score,song2022solving,chung2023diffusion,kawar2022denoising,meng2024diffusion, zhou2024enhancing,song2024solving}. This strategy constitutes the mainstream approach for diffusion-based inverse problem solving.

However, the luxury of a perfectly matched, full-strength prior is often unavailable. Practical constraints often force us to use “weaker priors,” meaning diffusion models that are heavily truncated at inference or trained on a mismatched dataset. For example, \citet{wang2024dmplug} and \citet{chihaoui2024blind} propose optimizing the initial noise to solve inverse problems. To meet the memory constraints, they have to drastically truncate the diffusion process to just a few steps. Consequently, the prior itself is severely limited: it can only generate low-fidelity, blurry images (see \texttt{Iteration 1} in Figure \ref{fig:teaser}).
In parallel, in data-scarce domains such as medical image restoration, the absence of training data precludes the creation of domain-specific models. Therefore, researchers often compromise by relying on pre-trained models from completely mismatched distributions \citep{knoll2019assessment, jalal2021robust, glaszner2024bigger,aali2025ambient,hu2025test, barbano2025steerable}.

Surprisingly, these weak priors can still yield highly competitive reconstructions. For example, \citet{wang2024dmplug} reports a $2$--$6$ dB PSNR improvement over state-of-the-art baselines across a range of inverse problems, despite using only a $3$-step DDIM prior \cite{song2021denoising}. Likewise, \citet{jalal2021robust} shows that a diffusion prior trained exclusively on 2D brain MRI can still be used to reconstruct out-of-distribution scans such as knee and abdomen MRI. 

These results raise a puzzle: standard intuition suggests that the quality of the reconstruction is bounded by the quality of the prior. Yet, in these regimes, weak priors perform nearly as well as, or even better than, stronger ones. At present, such successes are largely reported case by case in application-driven studies, and a systematic understanding of \emph{when} and \emph{why} weak priors suffice remains lacking.

Motivated by these observations, we ask: \textit{can a low-fidelity bedroom diffusion model recover face images?} More broadly, we study when an inverse problem is robust to the choice of prior, in the sense that a low-quality or mismatched generative prior can still yield high-quality reconstructions, and when it cannot.

Our main contribution is a rigorous characterization of this phenomenon, supported by 
empirical experiments, theoretical analysis, and local-correlation diagnostics.
\textbf{Our central message} is that weak priors can perform strongly in
data-informative regimes. This happens because of two related effects. First,
data can dominate the prior: when the measurement has large
effective dimension, reconstruction is driven
more by the observation than by prior. Second, weak priors are
not as weak as their samples suggest: even few-step or
out-of-domain natural-image priors can share local spatial correlations with
stronger matched priors. These two effects together provide a
\textit{complementary explanation}: informative measurements supply local
anchors, while the prior propagates these constraints through shared local spatial structure.

Conceptually, our results show that the prior may matter less than is often
assumed when measurements are sufficiently informative: the value of a strong,
domain-matched prior depends on measurement informativeness. Practically, this
supports using weak priors as a reasonable default in data-informative regimes
where strong, domain-matched priors are unavailable, while reserving stronger
matched priors for low-information settings where reconstruction is more
prior-dependent.
To support this, we provide:
\begin{enumerate}
    \item \textbf{Empirical evidence:} We conduct extensive experiments across
    inverse problems, datasets, and diffusion backbones. We find that even
    severely truncated or mismatched priors can yield accurate reconstructions
    in data-informative regimes.

    \item \textbf{Mechanistic explanation:} We combine Bayesian-consistency
    theory with local spatial-correlation diagnostics to explain why weak
    priors can work. The theory links measurement informativeness to reduced
    prior sensitivity: when the effective observed dimension is large and the
    signal is identifiable from the measurements, the posterior concentrates
    near the best measurement-consistent mode and the influence of the prior
    weights is reduced. The spatial diagnostics show that weak and stronger
    natural-image priors can share nearly identical local-correlation decay
    curves, suggesting that even mismatched priors can provide useful local structural information once the measurements supply enough anchors.

    \item \textbf{Failure modes:} Beyond the successful cases, we study when
    weak priors fail. These include box inpainting and large-scale
    super-resolution, where the observation leaves a large missing region or a
    small effective observed dimension. In these regimes, reconstruction
    becomes more prior-dependent, and weak or mismatched priors can produce
    inconsistent structures. We support these findings with both theory and
    experiments.

    \item \textbf{Algorithmic refinements:} To realize the prior-robust behavior
    predicted by our analysis, we refine initial-noise optimization with a new
    optimizer and an improved early-stopping rule, which stabilize
    reconstruction and reduce overfitting.
\end{enumerate}

\section{Weak diffusion generators as priors}\label{sec:phenomenon}
\subsection{Setup and our algorithm}\label{subsec:setup}
\textbf{Formulation:}  We study inverse problems of the form $y \;=\; \calA(x) + \epsilon,$
where \(y \in \mathbb{R}^m\) denotes the observed data and \(x \in \mathbb{R}^n\) is the unknown signal to be reconstructed. The map \(\calA:\bR^n \to \bR^m\) is called the \textit{forward operator}, and \(\epsilon \sim \bN(0,\sigma^2 I_m)\) models additive Gaussian noise.

\textbf{Diffusion prior:} The mapping from \(x\) to \(y\) is typically many-to-one, so recovering \(x\) from \(y\) is ill-posed. To regularize the problem, we place a prior distribution \(\pi\) on \(x\) that captures the structure of plausible signals (e.g., natural images). In modern generative modeling, such a prior is often specified through a pretrained generator \(G\) that maps Gaussian noise to a sample in data space.  In this paper, we focus on diffusion-based generators, where \(G\) is realized by a finite number of reverse diffusion steps (e.g., a \(k\)-step DDIM sampler).

\textbf{Weak prior in two axes:}
Most prior work uses a \emph{strong and matched} diffusion prior: a high-quality
diffusion sampler, often using the original long reverse chain, trained on a
distribution that closely matches the target signals. In this paper, we call a
prior \emph{weak} if it lacks one or both of these properties. Thus, weak
diffusion priors can arise along two independent axes:
\begin{itemize}
\item \emph{Generator quality:} \(G\) is obtained by running only a few reverse
steps (e.g., a 1--4 step DDIM sampler), which typically yields low-fidelity
unconditional samples.
\item \emph{Domain match:} \(G\) is trained on a source distribution different
from the target distribution of the signal.
\end{itemize}

For example, in a task of recovering human faces, a \(3\)-step DDIM generator
trained on face images is weak along the generator-quality axis: it is matched
to the target domain, but its unconditional samples are low-fidelity (see the
top-left image in Figure~\ref{fig:teaser}). A \(3\)-step DDIM generator trained
on bedroom images is weak along both axes: it is low-quality and also
mismatched to the target distribution (see the bottom-left image in
Figure~\ref{fig:teaser}). Similarly, a full bedroom-trained diffusion model
used to recover faces would be weak along the domain-match axis, even if its
unconditional bedroom samples are high quality.

Our main experiments focus on these few-step matched
and few-step mismatched settings with initial-noise optimization; we further separate the effect of domain mismatch from the choice of
inverse solver in Appendix~\ref{app:same-solver-dps}.

\textbf{Diffusion inverse problem solvers:} Given a diffusion prior $\pi_{\rm prior}$ and measurement model $y = \calA(x) + \epsilon$, the goal is to characterize the posterior $\pi(x\mid y)$. In practice, one can approximately sample from \(\pi(x\mid y)\) or compute the maximum a posteriori (MAP) estimate \(\arg\max_x \pi(x\mid y)\).

Most existing solvers use the \textit{full diffusion model} as the prior and follow a similar recipe: at each step, they adjust the backward update to incorporate information from the observation $y$ and the measurement model. This adjustment has been implemented in many ways, such as guidance, variable splitting, and sequential Monte Carlo; see \citet{zheng2025inversebench} for a summary. Among them, diffusion posterior sampling (DPS) \cite{chung2023diffusion} is one of the most popular solvers, so we take it as our baseline.

\textbf{Initial noise optimization:} Another line of work treats the generative model $G$ as a black-box generator and solves inverse problems by optimizing the latent input. Given an observation $y$, it computes
\begin{equation}\label{eqn:optimization objective}
\arg\min_z  L\left(\calA(G(z)), y\right)+ \lambda\calR(G(z)),
\end{equation}
where $L$ measures the mismatch between the simulated measurement $\calA(G(z))$ and $y$, and $\calR$ is a regularizer.

This latent-optimization view is conceptually simple and dates back to work on GANs and VAEs \cite{bora2017compressed}. However, when $G$ is a full diffusion model, optimizing \eqref{eqn:optimization objective} requires backpropagating through tens to hundreds of sampling steps, which is often too costly in memory and compute. As a result, prior work has to use a very weak generator $G$ (e.g., a 1--4 step DDIM sampler) \citep{wang2024dmplug,chihaoui2024blind}. Surprisingly, such inverse problem solvers based on weak generators can outperform full diffusion step-modification methods, which motivates our study.

Beyond inverse-problem tasks, initial-noise techniques have recently attracted interest across a range of applications for their potential to support inference-time scaling; see \citet{ben2024d,jia2026antithetic,ma2025scaling,zhou2025golden,wan2025fmplug,tang2025inferencetime}.

\textbf{\textsc{AdamSphere} and \textsc{HoldoutTopK} early stopping:} Two factors, concentration and overfitting, affect the solution quality of \eqref{eqn:optimization objective}.
First, diffusion models are trained with $z\sim \bN(0,I_d)$, whose mass in high dimension concentrates near the sphere $\|z\|\approx \sqrt{d}$ \cite{vershynin2018high}. Unconstrained optimization can push $z$ off this typical shell and degrade reconstruction quality. We therefore propose \textsc{AdamSphere}, a modified \textsc{Adam} optimizer that keeps $z$ on the sphere throughout. Second, it has been observed \citep{wang2024dmplug} that latent optimization can overfit measurement noise. We propose \textsc{HoldoutTopK} early stopping. We hold out a subset of measurements (not used in optimization) and track its loss at each iteration. We then return the latest iterate among the best $K$ holdout losses.  Our idea is inspired by statistical machine learning, where a validation set is used to approximate test performance and guide model selection. However, unlike the usual practice in machine learning of selecting the single lowest-validation-error candidate, we keep the top-$K$ and return the latest iterate among them. We find that $K>1$ reduces sensitivity to noisy fluctuations while still selecting a near-best checkpoint that generalizes to the holdout set.

Since \textsc{AdamSphere} already constrains the iterates to lie on the sphere, we optimize only the mean-squared error (MSE):
\begin{equation}\label{eqn:optimization objective ours}
\arg\min_z \left\lVert \calA(G(z)) - y \right\rVert^2 .
\end{equation}

Section \ref{subsec:single observation} shows that, under suitable assumptions, minimizing \eqref{eqn:optimization objective ours} is equivalent to steering the posterior toward the correct mode. The details of \textsc{AdamSphere} and \textsc{HoldoutTopK} are given in Appendix \ref{sec:adamsphere and holdout details}. Section \ref{subsec:comparison and ablation} presents ablation studies demonstrating their effectiveness.

\subsection{Main observations}
Our main observation is that weak diffusion generators can still deliver strong performance on many practical inverse problems. Figure~\ref{fig:teaser} shows representative examples. The left panel illustrates an inpainting task on a human face using two different weak priors, where the observed image (measurement) is obtained by masking a large fraction of pixels and adding Gaussian noise to the remaining pixels. The reference column shows the clean image, but it is never used by the reconstruction algorithm. Both rows use a 3-step DDIM prior: the prior in the top row (in the left panel) is trained on human face images, while the prior in the bottom row is trained on bedroom images.
We reconstruct the noisy image using the initial-noise optimization procedure in Section~\ref{subsec:setup}, and show intermediate reconstructions at optimization iterations $1, 5, 10, 100,$ and $1000$. In particular, \texttt{Iteration 1} corresponds to the unconditional 3-step DDIM sample, before any optimization is performed.

The priors in both rows are ``weak.'' In the top row, the prior is in-distribution but low quality: the unconditional sample captures the rough outline of a face, but most colors and finer details are missing. The bottom row uses an even weaker, out-of-distribution prior: the unconditional sample is a low-quality sketch of a bedroom image. Nevertheless, as optimization proceeds, both priors quickly fit the measurement and converge to reconstructions that are both visually similar and accurate. The trajectory under the bedroom-trained prior is especially striking: early iterates look like a bedroom scene with straight edges and box-like shapes; but as optimization goes on, these features fade and the image gradually turns into a human face, with smoother contours and more detail.
This firmly answers the question at the beginning of our paper: even a low-fidelity diffusion model trained on bedrooms can recover face images. 

The right panel of Figure~\ref{fig:teaser} shows a super-resolution experiment with similar conclusions. These examples are only a small sample of our study. In Section~\ref{subsec:cross-domain}, we present larger-scale evaluations across multiple tasks, comparing weak priors against strong-prior baselines. Overall, the results suggest that weak diffusion priors can match strong-prior baselines in data-informative regimes.

\section{Why weak priors can work}\label{sec:theory}
We now study why weak diffusion priors can still lead to accurate
reconstructions. We identify two complementary mechanisms. First,
when the measurement is sufficiently informative, the observation can
dominate the prior: different priors can lead to similar posteriors
because the posterior concentrates near the same measurement-consistent
signal. Sections~\ref{subsec: iid} and~\ref{subsec:single observation} formalize this effect through
Bayesian-consistency arguments and a high-dimensional Gaussian-mixture
surrogate. Second, weak priors are not arbitrary: even few-step or
domain-mismatched natural-image priors can share local spatial
statistics with stronger matched priors. Section~\ref{subsec:shared local} provides a
local-correlation diagnostic supporting this second mechanism.
After presenting these two mechanisms, we discuss when they break down in Section ~\ref{subsec:failure mode}.

\subsection{Posterior consistency under i.i.d. data}\label{subsec: iid}
We first recall a standard phenomenon in Bayesian inference: under an i.i.d.\ model, sufficiently informative data can dominate the prior. Consider the linear inverse problem $y = Ax + \epsilon,$
where $\epsilon$ is the standard Gaussian noise. In a synthetic experiment, we fix a ground-truth signal \(x^\star\), draw a linear operator \(A\), and generate i.i.d.\ observations \(y_{1:N}\). We then place two well-separated Gaussian-mixture priors \(\pi_A\) and \(\pi_B\) on \(x\) (Figure~\ref{fig:posterior} left panel). Despite their large difference, as \(N\) increases both posteriors \(\pi_A(\cdot\mid y_{1:N})\) and \(\pi_B(\cdot\mid y_{1:N})\) concentrate around \(x^\star\), as shown in the right two panels of Figures~\ref{fig:posterior}.

\begin{figure}
    \centering
    \includegraphics[width=\linewidth]{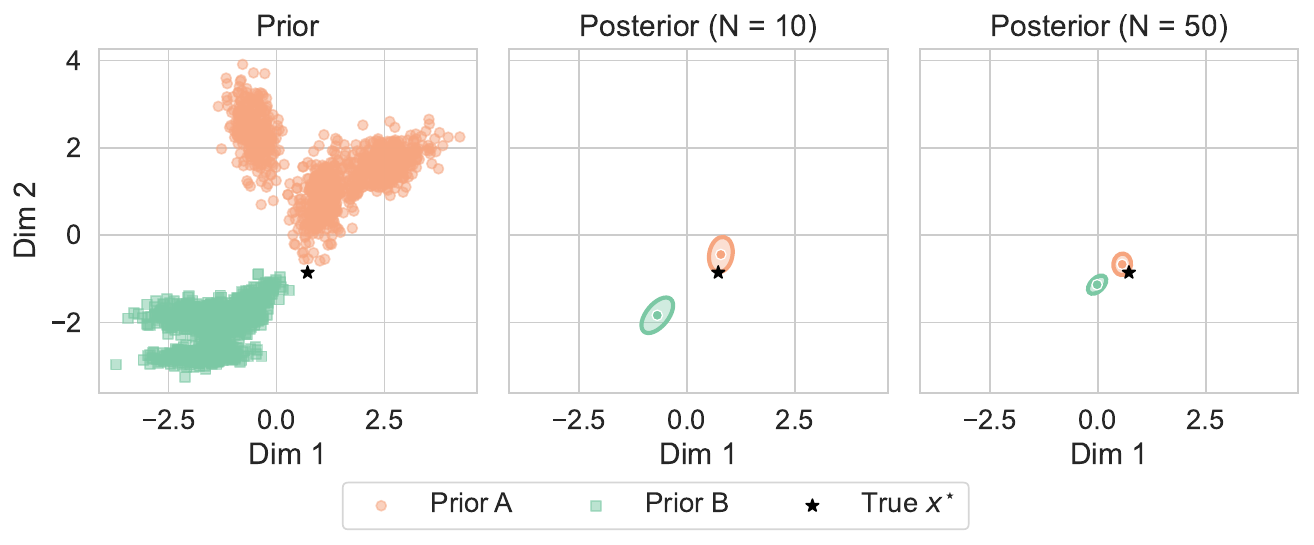}
    \caption{Posterior concentrates around  $x^\star$ as $N$ grows.}
    \label{fig:posterior}
\end{figure}

This is an instance of \emph{posterior consistency}. At a high level, the posterior is determined by both the prior and the data. As the data become increasingly informative (here, as the sample size grows), the likelihood term ultimately overwhelms the prior. Thus, even when the priors are very different, the resulting posteriors can still both concentrate on the same ground-truth signal. A classical result due to \citet{schwartz1965bayes} makes this precise. For readability, assumptions and the exact convergence notion are deferred to the Appendix \ref{subsec: iid consistency}.
\begin{theorem}[Posterior consistency, informal]\label{thm:posterior consistency}
Let \(\{P_x\}_{x\in\mathcal X}\) be a statistical model and let \(Y_1,Y_2,\ldots\) be i.i.d.\ from \(P_{x^\star}\).
Under standard regularity and identifiability assumptions, and assuming the prior \(\pi\) puts positive mass in every neighborhood of \(x^\star\), the posterior \(\pi(\cdot\mid y_{1:N})\) concentrates at \(x^\star\) as \(N\to\infty\).
\end{theorem}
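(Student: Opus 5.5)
We would follow Schwartz's classical argument, which reduces posterior consistency to two ingredients: a Kullback--Leibler support condition on the prior and the existence of exponentially consistent tests. Writing $p_x$ for the density of $P_x$ and $\ell_N(x)=\sum_{i=1}^N \log\{p_x(Y_i)/p_{x^\star}(Y_i)\}$, the posterior mass of the complement of a neighborhood $U$ of $x^\star$ is
\begin{equation*}
\pi(U^c\mid y_{1:N})=\frac{\int_{U^c} e^{\ell_N(x)}\,\pi(dx)}{\int e^{\ell_N(x)}\,\pi(dx)} .
\end{equation*}
The plan is to lower-bound the denominator by $e^{-N\beta}$ for every $\beta>0$ eventually, and to upper-bound the numerator by $e^{-N c}$ for some fixed $c>0$. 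Choosing $\beta<c$ then gives $\pi(U^c\mid y_{1:N})\to 0$ almost surely under $P_{x^\star}^\infty$.

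\textbf{Step 1 (denominator).} We interpret ``positive mass in every neighborhood'' in the Kullback--Leibler sense. The regularity assumption is that $x\mapsto \bKL(P_{x^\star}\|P_x)$ is continuous at $x^\star$; for the Gaussian linear model it equals $\|A(x-x^\star)\|^2/2$. Under this assumption, every KL ball $K_\delta=\{x:\bKL(P_{x^\star}\|P_x)<\delta\}$ has positive prior mass. Restrict the integral to $K_\delta$ and apply Fatou's lemma together with the strong law of large numbers, which gives $N^{-1}\ell_N(x)\to -\bKL(P_{x^\star}\|P_x)>-\delta$ pointwise. This yields
\begin{equation*}
\liminf_N e^{N\beta}\int e^{\ell_N}\,d\pi\ge \int_{K_\delta}\liminf_N e^{N(\beta-\delta)+o(N)}\,d\pi=\infty
\end{equation*}
almost surely whenever $\delta<\beta$.

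\textbf{Step 2 (numerator via tests).} Here the identifiability assumption enters. We need test functions $\phi_N$ with $E_{x^\star}\phi_N\le e^{-Nc}$ and $\sup_{x\in U^c}E_x(1-\phi_N)\le e^{-Nc}$. Given such tests, split the numerator as $\phi_N\cdot(\cdot)+(1-\phi_N)(\cdot)$. The first piece is handled by Markov's inequality and Borel--Cantelli, since $\phi_N\to0$ almost surely. For the second piece, Fubini gives $E_{x^\star}\int_{U^c}(1-\phi_N)e^{\ell_N}\,d\pi=\int_{U^c}E_x(1-\phi_N)\,d\pi\le e^{-Nc}$. Applying Markov and Borel--Cantelli again shows this piece is at most $e^{-Nc/2}$ eventually.

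\textbf{Main obstacle.} The hard part is constructing the uniformly exponentially consistent tests in Step 2, because $U^c$ is noncompact. For a single alternative $x_1$, Hoeffding or Chernoff bounds applied to a thresholded likelihood ratio, or to a bounded statistic separating $P_{x^\star}$ from $P_{x_1}$, give exponential error rates. Identifiability, $P_x\ne P_{x^\star}$ for $x\ne x^\star$, supplies such a separating statistic. Compactness of a large ball then lets us cover the bounded part of $U^c$ by finitely many alternatives and take the maximum of the resulting tests. For the unbounded part, in the Gaussian linear case we would use the empirical mean $\bar Y_N$ directly. With $A$ injective, the test $\phi_N=1\{\|\bar Y_N-Ax^\star\|>\eta\}$ has exponentially small errors uniformly over $\{\|A(x-x^\star)\|>2\eta\}$ by Gaussian concentration, and this set contains $U^c$ for $\eta$ small. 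In the general statement, we would simply take the existence of such uniformly consistent tests as part of the ``standard regularity and identifiability assumptions,'' as Schwartz does.
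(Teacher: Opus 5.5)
Your proposal is correct and is Schwartz's argument, which is the route the paper takes: its formal version (Theorem~\ref{thm:posterior-consistency-schwartz}) assumes KL support at $x^\star$ plus uniformly consistent tests, and cites Schwartz (1965) without reproducing the proof. The one step to add is that the paper's test condition only asks for errors tending to zero, so you should invoke the standard blocking-plus-Hoeffding lemma that upgrades such tests to exponentially consistent ones in the i.i.d.\ model; more minor, take $\beta<c/2$ rather than $\beta<c$, and bound the $\phi_N$ piece directly by $\phi_N$ at the level of the posterior ratio.
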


Theorem \ref{thm:posterior consistency} shows that, under the i.i.d. model, the effect of the prior becomes negligible as the sample size grows. 
While insightful, the i.i.d.\ setting differs from image inverse problems, where measurements (pixels) are strongly dependent and we typically have only a single observation \(y^\star\). The next section develops theory to study the posterior behavior in high-dimensional, single-observation regime.

\subsection{Data dominance in high-dimensional inverse problems}\label{subsec:single observation}
We continue to study the linear inverse problem
$y = Ax + \epsilon,$
where $x \in \mathbb{R}^n$, $A \in \mathbb{R}^{m \times n}$, and $\epsilon \sim \bN(0,\sigma^2 I_m)$. Throughout, we focus on the practically relevant regime where only a \emph{single} observation $y^\star$ is available (e.g., one corrupted image), while the measurement dimension $m$ is large (e.g., $512\times512\times 3$). Our goal is to understand when the posterior is \emph{robust} to the choice of generative prior, so that substantially different priors can still yield similar reconstructions. 

Throughout this section, we use $\varphi(\cdot, \mu, \Sigma)$ to denote the density of $\bN(\mu, \Sigma)$.

\textbf{A tractable model for generative priors.}
Modern generative priors  such as $k$-step DDIM do not admit a tractable density. However, in practice their samples tend to lie near the data manifold, in the sense that generated images are often close to plausible clean-image prototypes. Motivated by this, a common surrogate is to model the prior as a Gaussian mixture. Finite Gaussian mixture models can approximate a broad class of distributions arbitrarily well \citep{mclachlan2000finite}. They are therefore widely adopted in studies of diffusion models, see \citet{shah2023learning,wu2024theoretical,gatmiry2025learning,liang2025unraveling}. In particular, we assume the prior is an isotropic Gaussian mixture
\begin{equation}\label{eqn: equal-weight gaussian mixture}
    \pi(x)
    \;=\;
    \sum_{j=1}^M w_j\varphi~\!\big(x;\mu_j,\tau^2 I_n\big).
\end{equation}
Here, $\{\mu_j\}_{j=1}^M$ are the component means, which can be interpreted as the universe of clean data. We interpret \textit{different generative priors} as different collections of weights $\{w_j\}_{j=1}^M$. Assuming all components share a common variance $\tau^2$ keeps the core mathematics intact while keeping the argument clean. In the Appendix, we extend the model to allow component-specific (heterogeneous) variances.

\textbf{Posterior collapse to the best-scoring mode.} Given an observed $y^\star$, it is known that the posterior remains a Gaussian mixture. The posterior weight of component $j$ is proportional to	$\tilde w_j := w_j\cdot \varphi~\!\big(y^\star;A\mu_j,\Sigma\big)$, where $\Sigma := \sigma^2 I_m + \tau^2 A A^\top$. See Appendix \ref{subsec:GoM-posterior} for a proof.

For each component $j$, define the \emph{score} as the squared measurement residual
\begin{equation}\label{eqn:squared residual}
	s_j(y^\star)
	\;:=\;
	\frac12\big\|\Sigma^{-1/2}(y^\star-A\mu_j)\big\|_2^2.
\end{equation}

Let $s_{(1)}(y^\star)$ and $s_{(2)}(y^\star)$ be the lowest and second-lowest scores. We define the \emph{per-dimension score gap} as
\begin{equation}
	\delta(y^\star)
	\;:=\;
	\frac{1}{m}\Big(s_{(2)}(y^\star) - s_{(1)}(y^\star)\Big).
	\label{eq:per-pixel-gap}
\end{equation}

The following theorem states that, as $m$ grows, the posterior concentrates on the component with the smallest score.

\begin{theorem}\label{thm:posterior-concentration-MoG}
With the notation above, assume:
\begin{enumerate}
\item (Bounded weight ratio) There exists $C>1$ such that $1/C \le w_i/w_j \le C$ for all $(i,j)$.
\item ($\delta_0$-identifiable) There is a unique minimizer $j^\star=\arg\min_j s_j(y^\star)$, and the score gap  $\delta(y^\star)\ge \delta_0$ for a constant $\delta_0>0$.
\end{enumerate}
Let $J$ be a categorical random variable on $\{1,2,\ldots,M\}$ with $\bP(J=j) := \tilde w_j/(\sum_{i=1}^M \tilde w_i), $
i.e., $J$ indexes the component selected in the Gaussian mixture posterior. Then
\begin{itemize}
    \item The probability of \textit{not selecting} the lowest score component satisfies
\begin{align*}
    \bP(J \neq j^\star) \leq CM \exp(-\delta_0 m).
\end{align*}
\item The posterior concentrates onto the $j^\star$-th Gaussian component at an exponential rate in the measurement dimension $m$:
\begin{align*}
    \lVert \pi(\cdot \mid y^\star) - \bN(m_{j^\star}(y^\star), \Sigma_{\text{post}})\rVert_{\mathsf{TV}} \leq CM \exp(-\delta_0 m),
\end{align*}
where $m_{j^\star}(y^\star) = \mu_{j^\star} + \tau^2 A^\top \Sigma^{-1}(y^\star - A\mu_{j^\star})$, $\Sigma_{\text{post}} = \tau^2 I_n - \tau^4 A^\top \Sigma^{-1}A$, $\mathsf{TV}$ stands for total-variation distance.
\end{itemize}
\end{theorem}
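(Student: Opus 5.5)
The plan is to work directly with the closed-form Gaussian-mixture posterior recalled just before the statement (proved in Appendix~\ref{subsec:GoM-posterior}). Conditionally on the component, $x\mid(J=j,y^\star)$ is Gaussian with mean $m_j(y^\star)=\mu_j+\tau^2A^\top\Sigma^{-1}(y^\star-A\mu_j)$ and covariance $\Sigma_{\text{post}}=\tau^2 I_n-\tau^4A^\top\Sigma^{-1}A$. This is the standard Gaussian conditioning formula applied to the joint law of $(x,y)$ under component $j$, where $\mathrm{Cov}(x,y)=\tau^2A^\top$ and $\mathrm{Cov}(y)=\Sigma$. The covariance does not depend on $j$ because all components share $\tau^2$. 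Hence
\[
\pi(\cdot\mid y^\star)=\sum_{j}\bP(J=j)\,\bN(m_j(y^\star),\Sigma_{\text{post}}).
\]

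\textbf{First bullet.} All components share the same $\Sigma$, so the normalizing constants of $\varphi(y^\star;A\mu_j,\Sigma)$ coincide and cancel. This leaves $\tilde w_j\propto w_j e^{-s_j(y^\star)}$, and therefore
\[
\bP(J=j)=\frac{w_j e^{-s_j}}{\sum_i w_i e^{-s_i}} \le \frac{w_j}{w_{j^\star}}\,e^{-(s_j-s_{j^\star})}.
\]
For every $j\neq j^\star$ we have $w_j/w_{j^\star}\le C$ by the bounded weight ratio assumption. We also have $s_j-s_{j^\star}\ge s_{(2)}-s_{(1)}=m\,\delta(y^\star)\ge m\delta_0$ by identifiability. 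Summing over the at most $M-1$ indices $j\ne j^\star$ gives $\bP(J\ne j^\star)\le C(M-1)e^{-\delta_0 m}\le CMe^{-\delta_0 m}$.

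\textbf{Second bullet.} Write $p=\pi(\cdot\mid y^\star)=\bP(J=j^\star)\,q^\star+\sum_{j\ne j^\star}\bP(J=j)\,q_j$, where $q^\star=\bN(m_{j^\star},\Sigma_{\text{post}})$ and $q_j$ denotes the $j$-th posterior component. Then
\[
p-q^\star=\sum_{j\ne j^\star}\bP(J=j)\,(q_j-q^\star),
\]
and the triangle inequality together with $\|q_j-q^\star\|_{\mathsf{TV}}\le1$ gives $\|p-q^\star\|_{\mathsf{TV}}\le\bP(J\ne j^\star)$. The first bullet then finishes the argument. Equivalently, this is a coupling argument: sample $J$ from its posterior law and use a common Gaussian draw, so that $p$ and $q^\star$ agree whenever $J=j^\star$.

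\textbf{Main obstacle.} The only delicate points are bookkeeping. One must check that the Gaussian normalizing constants really cancel, which relies on the shared $\tau^2$; in the heterogeneous-variance extension they would instead contribute a $\log\det$ term that must be absorbed into the gap. One must also verify the conditional mean and covariance formulas, with $\Sigma_{\text{post}}$ positive semidefinite and possibly degenerate. A degenerate $\Sigma_{\text{post}}$ is harmless for the TV bound because the argument never uses densities of the $q_j$.
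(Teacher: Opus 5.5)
Your proposal is correct and follows the paper's argument. The paper bounds $\tilde w_j/\tilde w_{j^\star}$ by $C e^{-(s_j-s_{j^\star})}$, sums over $j\neq j^\star$, and obtains the TV bound from the same mixture decomposition and triangle inequality with $\|\cdot\|_{\mathsf{TV}}\le 1$. It does this in the appendix for the heterogeneous-variance version with selection score $\ell_j = s_j + \tfrac12\log\det\Sigma_j$, which reduces to your case when all $\tau_j$ coincide, as you anticipated.

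One minor remark: since $\sigma^2>0$, $\Sigma_{\text{post}} = (\tau^{-2}I_n + \sigma^{-2}A^\top A)^{-1}$ is positive definite, so your degeneracy caveat is unnecessary, though harmless.
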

\textbf{Interpretation.}
Theorem~\ref{thm:posterior-concentration-MoG} shows that if the best-matching component is $\delta_0$-identifiable, then the posterior concentrates overwhelmingly on this component,  with the remaining mass decaying exponentially in dimension $m$. A simple intuition for Assumption 2 is: in image inverse problems like random inpainting, once many pixels are observed, incorrect candidate images often disagree with the observation in many places (edges, textures, colors), so the best-matching candidate becomes clearly separated.

Notably, Theorem \ref{thm:posterior-concentration-MoG} is largely insensitive to the particular choice of generative prior: as long as the mixture weights are bounded within constant factors, the prior affects the bound only through constants. Therefore, in the data-informative regime (i.e., when Assumption~2 holds) and with high-dimensional observations (large $m$), very different priors can still yield similar reconstructions, with the posterior concentrating on the same mode.

\paragraph{Score and MSE.} We remark that the score in \eqref{eqn:squared residual} coincides (up to a constant factor) with the MSE in many practical inverse problems. This justifies our optimization procedure \eqref{eqn:optimization objective ours} used in Section \ref{subsec:setup}: minimizing the MSE is often equivalent to minimizing the score.

For inpainting problems, the measurement operator $A = P_{\Omega}$ is a coordinate projection (i.e., it selects a set $\Omega$ of the observed pixels), so $AA^\top = I_m$ and the $\Sigma$ simplifies to $(\sigma^2+\tau^2)I_m$. Therefore, 
\[
s_j(y^\star) \;=\; \frac{1}{2(\sigma^2+\tau^2)}\sum_{i\in\Omega}\big(y^{\star,i}-\mu_{j,i}\big)^2,
\]
which is exactly the MSE restricted to observed pixels.

For super-resolution, the measurement operator $A$ produces a low-resolution image $y\in\mathbb{R}^m$ from a high-resolution signal $x\in\mathbb{R}^n$ by averaging over non-overlapping downsampling blocks: each row of $A$ has $k$ nonzero entries with weights $1/k$, the row supports are disjoint, and $m \approx n/k$. Therefore, $AA^\top = k^{-1}I_m$, and the score simplifies to $s_j(y^\star)
\;=\;
(2k(\sigma^2+\tau^2))^{-1}\big\|y^\star - A\mu_j\big\|_2^2,$
which is again proportional to the MSE.

\textbf{Justifying the identifiable assumption.} We justify the $\delta_0$-identifiable assumption  using both theory and empirical results. We evaluate three datasets: CelebA, Church, and Bedroom. For each image (with pixel values normalized to $(-1,1)$), we generate an inpainting observation by masking $70\%$ of pixels and adding Gaussian noise to the remaining pixels. For each resulting observation, we apply the same forward mask to every image in the dataset and compute the MSE on the observed pixels (which is proportional to the score). We then compute the per-dimension MSE gap for each image and report aggregated gap statistics in Table~\ref{tab:MSE gap}.

\begin{wraptable}{r}{5cm} \caption{Random inpainting score gap. For each target image, we compute the per-dimension gap between the smallest and second-smallest observed-pixel MSE among all candidate images. All reported statistics are computed over 100 target images.}\label{tab:MSE gap} \begin{tabular}{lcc} \hline Dataset & Mean (Std) & Min  \\ \hline CelebA & 0.22 (0.06) & 0.11\\ Church & 0.28 (0.06) & 0.15\\ Bedroom & 0.27 (0.07) & 0.09\\ \hline \end{tabular} \end{wraptable}

These results indicate a clear separation: the mean per-dimension gap is about 0.22–0.28. Interpreting this as a per-pixel squared difference, it corresponds to an average absolute difference of at least ($\sqrt{0.22}\approx 0.47$) on the observed entries; since each pixel lies in $[-1,1]$ (range $2$), this is a substantial mismatch. Moreover, the gap stays bounded away from zero even in the worst case: the minimum per-dimension MSE gap over all images is at least $0.09$ (average absolute difference of at least $0.3$). Thus, these suggest that the best-matching candidate is typically well separated from the runner-up. In Appendix \ref{subsec:justifying-assumption2}, we provide a theoretical justification for Assumption~2 using concentration inequalities.

The same bound also points to possible failure modes: posterior concentration
can weaken when the score gap \(\delta_0\) is small or when the effective
measurement dimension \(m\) is small. We return to these cases in
Section~\ref{subsec:failure mode}, after introducing the
local-structure view.

\subsection{Shared local structure in weak and stronger priors}\label{subsec:shared local}

The posterior-concentration result above explains one side of the phenomenon:
informative measurements can reduce sensitivity to the prior. It leaves open a
second question: are `weak priors' truly weak? That is, even
when their  samples are low-fidelity or mismatched, do they still
preserve useful structural information that helps reconstruction?

We answer this question through a simple diagnostic: the spatial autocorrelation
profile of unconditional samples. It measures how quickly pixel values
decorrelate as spatial distance increases, and therefore captures short-range
local structural information in the generated images.

We compare diffusion priors trained on CelebA-HQ and LSUN Bedroom, sampled with
either 3 or 20 DDIM steps. For each model--sampler pair, we compute the
 spatial autocorrelation profile of their samples; the full
definition and implementation details are given in
Appendix~\ref{app:spatial-correlation}.

Table~\ref{tab:shared-local-ac} reports representative lags. The main pattern
is that the autocorrelation profiles are highly similar across both axes of
variation: sampler strength and training domain. In particular, changing from
20 DDIM steps to 3 DDIM steps greatly reduces unconditional sample fidelity, and
changing from CelebA-HQ to LSUN Bedroom changes the semantic content of the
prior. Nevertheless, all four profiles show a similar decay with distance. For
example, the Pearson correlation between the 3-step Bedroom profile and the
20-step CelebA-HQ profile over lags \(1,\ldots,32\) is \(0.9987\).
This suggests that weak priors can lose sample fidelity or domain match while
still preserving similar short-range local structural information. 

  \begin{table}[htbp!]
  \centering
  \small
  \setlength{\tabcolsep}{5pt}
  \begin{tabular}{rrrrr}
  \toprule
  Lag & CelebA 3 & CelebA 20 & Bedroom 3 & Bedroom 20 \\
  \midrule
  0  & 1.0000 & 1.0000 & 1.0000 & 1.0000 \\
  1  & 0.9558 & 0.9814 & 0.9645 & 0.9615 \\
  2  & 0.9357 & 0.9594 & 0.9370 & 0.9260 \\
  4  & 0.8866 & 0.9100 & 0.8786 & 0.8573 \\
  8  & 0.7767 & 0.8108 & 0.7637 & 0.7437 \\
  16 & 0.5595 & 0.6261 & 0.5632 & 0.5618 \\
  32 & 0.2666 & 0.3481 & 0.3052 & 0.3207 \\
  \bottomrule
  \end{tabular}
\caption{Spatial autocorrelation of  samples from diffusion priors
trained on CelebA-HQ and LSUN Bedroom, sampled with 3 or 20 DDIM steps. Values
are averaged over 100 samples, RGB channels, and all directions at each rounded
pixel distance.}
  \label{tab:shared-local-ac}
  \end{table}

\textbf{Mechanistic explanation:}
Together with Theorem~\ref{thm:posterior-concentration-MoG}, this gives a
complementary explanation for why weak priors can work. Informative
measurements provide local anchors by fixing or strongly constraining many
pixels, while the prior helps propagate these constraints to nearby unobserved
or corrupted pixels. Because weak and stronger natural-image priors share similar
short-range local correlations, even a weak prior can provide useful local
structural information for this propagation. Thus, weak priors can be poor
unconditional generators while still serving as useful conditional priors when
the measurement provides enough anchors.

\subsection{When weak priors fail}\label{subsec:failure mode}
The two mechanisms above also clarify when weak diffusion priors fail. First,
data dominance can break down when the observation does not identify a unique
measurement-consistent signal. In large box inpainting, for example, the
operator \(A\) removes a contiguous region. Many images can match the observed
pixels while differing inside the missing box, so the
per-dimension score gap in Assumption~2 can be very small. The posterior then
need not concentrate on a single mode, and reconstruction becomes
prior-dominated. Second, local structural information is not enough when the task requires
global semantic generation. In box inpainting, the missing region must be
filled using information that is largely absent from the measurement. A 
mismatched prior may then propagate local statistics but still generate
semantic content inconsistent with the target image.

A similar issue appears when the effective measurement dimension is small. The
bound in Theorem~\ref{thm:posterior-concentration-MoG} contains the factor
\(M\exp(-\delta_0 m)\): it grows linearly with the number of candidate modes
\(M\) and decays exponentially in the number of observed measurements \(m\).
For \(70\%\) random inpainting on a \(256\times256\) RGB image,
\(m\approx 256\times256\times 30\%\times 3\approx 60{,}000\), so the
exponential term can dominate. For \(16\times\) super-resolution, however,
\(m\) can be as low as \(16\times16\times3=768\). In such low-information
regimes, the measurement provides too few anchors, so reconstruction becomes
much more sensitive to the strength and domain match of the prior. We
empirically validate these failure modes in Section~\ref{subsec:failure}.

\section{Experiments}\label{sec: experiment}

We evaluate our method on four tasks: inpainting, Gaussian deblurring, super-resolution, and nonlinear deblurring with additive Gaussian noise. For each subsection, we report the subset of tasks most relevant to the setting under study, while full results across are provided in Appendix~\ref{sec:additional experiment}.
Additional comparisons with DAPS \citep{zhang2025improving} and DiffPIR \citep{zhu2023denoising}, Deep Random Projector, and scientific
inverse-problems are provided in
Appendices~\ref{app:daps}, \ref{app:drp}, and \ref{app:science}. We  study sensitivity to measurement noise in
Appendix~\ref{app:noise-sensitivity}.
We evaluate each method using standard metrics, e.g. PSNR, SSIM, and LPIPS \cite{zhang2018unreasonable}.

Unless otherwise stated, all experiments are conducted using publicly available pretrained diffusion checkpoints for the Bedroom (LSUN-Bedroom \cite{yu2015lsun}), Church (LSUN-Church \cite{yu2015lsun}), and Human face (CelebA-HQ \cite{liu2015faceattributes}) datasets. For each setting, we randomly select 100 test images for evaluation. For the ablation studies, we use a reduced set of 50 images. Detailed experiment setup is provided in Appendix~\ref{subsec:experiment_config}. Appendix~\ref{sec:additional visulization} provides additional visualizations. 

\textbf{Compute note.}
Our goal is to study robustness to weak priors rather than to provide a
compute-matched benchmark. We therefore report each method under its default
settings; compute-matched comparisons are given in Appendix~\ref{subsec:compute compare}.

\subsection{Cross-Domain Inverse Problem Solving}\label{subsec:cross-domain}

Recall that weak priors can arise either from \textit{(1) using a few-step sampler} or \textit{(2) training on a mismatched dataset.} We therefore design the following cross-domain experiments to evaluate performance under both types of weak priors.

We consider four restoration tasks: inpainting, Gaussian deblurring, $4\times$ super-resolution, and nonlinear deblurring for bedroom, church, and human face images. For each task–dataset pair (e.g., human face inpainting), we benchmark reconstruction under a strong prior using the DPS algorithm \cite{chung2023diffusion}, which uses a $1000$-step DDPM trained on the same distribution as the target image. We then compare against two weak-prior settings: a 3-step DDIM prior trained on matched dataset, and a 3-step DDIM prior trained on mismatched dataset (e.g., LSUN-Church or LSUN-Bedroom). For each weak prior, we solve the corresponding task using initial-noise optimization with our optimizer and \textsc{HoldoutTopK} early-stopping strategy, as described in Section \ref{subsec:setup}.
We choose a $3$-step DDIM prior following the recommendation of \citet{wang2024dmplug}, which includes an ablation study over the number of DDIM steps.

Table~\ref{tab:cross-domain} reports results for inpainting and $4\times$super-resolution, with the remaining tasks deferred to Appendix \ref{sec:additional experiment}. The results suggest weak prior can achieve accurate reconstructions. With initial-noise optimization, the few-step, in-domain prior overall outperforms the DPS baseline (which uses an in-domain, $1000$-step prior) across all metrics. For example, our method achieves a PSNR gain of $0.88$--$1.80$ dB on inpainting, and an even larger gain on super-resolution. These results are also consistent with \citet{wang2024dmplug} and \citet{chihaoui2024blind}.
Somewhat surprisingly, even with the weakest prior (out-of-domain and few-step generator), performance can still match or exceed DPS. For example, using a bedroom-pretrained model for CelebA inpainting, our method achieves PSNR $32.76$, compared to DPS's $31.98$. Thus, these results show that even few-step, out-of-domain diffusion generators can serve as effective priors. \footnote{Although our method often outperforms DPS on these metrics, we do not claim that a weak prior is generally better than a strong prior. DPS is a widely used strong-prior baseline, but it is not necessarily the best inverse-problem solver for every task or dataset. Still, consistent gains over DPS show that weak, even out-of-domain priors can be practically effective when the measurements are sufficiently informative.}

We also study how domain mismatch affects reconstruction quality.
 Specifically, we compare our method using a few-step generator under matched versus mismatched source–target domains. As shown in Table~\ref{tab:cross-domain}, the in-domain generator outperforms the out-of-domain  generator. The gap is sometimes moderate (e.g., CelebA inpainting) and sometimes small (e.g., Bedroom inpainting). Meanwhile, using a bedroom-trained model versus a church-trained model often yields similar performance, especially compared with CelebA-trained model. This matches the intuition that bedrooms and churches are more similar to each other than to human faces. These comparisons suggest that domain mismatch can affect performance, but the effect is often modest, smaller when the source and target domains are visually closer and larger when they are farther apart.

\begin{table*}[t]
\small
\centering
\caption{Cross-Domain results for inpainting and super-resolution. 
The “Model” column indicates the source domain of the pretrained diffusion model. 
The “CelebA,” “Bedroom,” and “Church” columns indicate the target domains of the images being reconstructed. DPS always uses an in-domain model, i.e., the source and target domains coincide.
}\label{tab:cross-domain}
\setlength{\tabcolsep}{3pt}
\begin{tabular}{lll|ccc|ccc|ccc}
\hline
\multirow{2}{*}{\textbf{Task}} &
\multirow{2}{*}{\textbf{Method}} &
\multirow{2}{*}{\textbf{Model}} &
\multicolumn{3}{c|}{\textbf{CelebA}} &
\multicolumn{3}{c|}{\textbf{Bedroom}} &
\multicolumn{3}{c}{\textbf{Church}} \\

& & 
& PSNR $\uparrow$ & LPIPS $\downarrow$ & SSIM $\uparrow$
& PSNR $\uparrow$ & LPIPS $\downarrow$ & SSIM $\uparrow$
& PSNR $\uparrow$ & LPIPS $\downarrow$ & SSIM $\uparrow$ \\

\midrule

\multirow{4}{*}{\textbf{Inpainting}} & DPS & In-domain
& 31.98 & \textbf{0.14} & 0.88
& 27.97 & 0.23 & 0.82
& 24.15 & 0.25 & 0.73 \\

\cmidrule(l){2-12}

& \multirow{3}{*}{Ours} & CelebA
& \textbf{33.78} & 0.15 & \textbf{0.91}
& 27.78 & 0.27 & 0.83
& 23.56 & 0.36 & 0.72 \\

&  & Bedroom
& 32.76 & 0.17 & 0.90
& \textbf{28.88} & \textbf{0.20} & \textbf{0.87}
& 24.22 & 0.28 & 0.75 \\

&  & Church
& 32.62 & 0.16 & 0.90
& 28.66 & 0.21 & 0.86
& \textbf{24.93} & \textbf{0.23} & \textbf{0.77} \\

\midrule

\multirow{4}{*}{\textbf{Super-Res}} & DPS & CelebA
& 26.82 & 0.22 & 0.74
& 22.95 & 0.39 & 0.64
& 20.28 & 0.36 & 0.52 \\

\cmidrule(l){2-12}

& \multirow{3}{*}{Ours} & CelebA
& \textbf{31.27} & \textbf{0.18} & \textbf{0.86}
& 25.88 & 0.35 & 0.74
& 22.68 & 0.41 & 0.63 \\

&  & Bedroom
& 30.34 & 0.22 & 0.84
& \textbf{26.59} & \textbf{0.25} & \textbf{0.78}
& 22.86 & 0.33 & 0.65 \\

&  & Church
& 30.10 & 0.22 & 0.84
& 26.30 & 0.27 & 0.77
& \textbf{23.09} & \textbf{0.27} & \textbf{0.67} \\

\hline
\end{tabular}
\end{table*}

\begin{table*}[htbp!]
\centering
\caption{Comparison with DMPlug under in-domain priors for random inpainting and nonlinear deblurring task.}
\label{tab:algo_compare}

\begin{tabular}{ll|ccc|ccc}
\toprule
\multirow{2}{*}{\textbf{Task}} &
\multirow{2}{*}{\textbf{Dataset}}
& \multicolumn{3}{c|}{\textbf{Ours}}
& \multicolumn{3}{c}{\textbf{DMplug}} \\

& & PSNR $\uparrow$ & LPIPS $\downarrow$ & SSIM $\uparrow$
  & PSNR $\uparrow$ & LPIPS $\downarrow$ & SSIM $\uparrow$ \\

\midrule

\multirow{3}{*}{\textbf{Inpainting}} & CelebA
& \textbf{33.784} & \textbf{0.147} & \textbf{0.915}
& 32.778 & 0.189 & 0.892 \\

& Church
& 24.927 & \textbf{0.234} & \textbf{0.771}
& \textbf{24.957} & 0.246 & 0.771 \\

& Bedroom
& \textbf{28.883} & \textbf{0.201} & \textbf{0.867}
& 28.271 & 0.255 & 0.836 \\

\midrule

\multirow{3}{*}{\textbf{Nonlinear}} & CelebA
& \textbf{25.21} & \textbf{0.27} & \textbf{0.74}
& 24.85 & 0.28 & 0.73 \\

& Church
& \textbf{20.88} & \textbf{0.39} & \textbf{0.56}
& 20.48 & 0.42 & 0.53 \\

& Bedroom
& \textbf{22.67} & \textbf{0.39} & \textbf{0.64}
& 22.21 & 0.41 & 0.63 \\

\hline
\end{tabular}
\end{table*}
\subsection{Comparison with Optimization-Based Methods and Ablations
}\label{subsec:comparison and ablation}

Next, we compare our method with the state-of-the-art optimization-based approach DMPlug \cite{wang2024dmplug}. Both methods optimize the initial noise, but with different optimizers and early-stopping strategies. To isolate the effect of algorithmic design, we use the same in-domain $3$-step DDIM prior (as suggested in \cite{wang2024dmplug}) and run both algorithms from the same randomly sampled initial noise, for the same number of iterations.
Results for random inpainting and nonlinear deblurring are summarized in Table~\ref{tab:algo_compare}. Other tasks results are provided in Appendix~\ref{sec:additional experiment}. Our PSNR exceeds DMPlug in most cases, with gains ranging from $0.15$ to $1$ dB; in the remaining cases, the difference is below $0.1$ dB. Our LPIPS is consistently lower than DMPlug’s, with improvement up to $30\%$.

\begin{table}[htbp!]
\centering
\small
\setlength{\tabcolsep}{5pt}
\begin{tabular}{llcccc}
\hline
Task & Dataset & (A) & (B) & (C) & (D) \\
\hline
\multirow{2}{*}{Gaussian}
 & Church  & 22.034 & 22.394 & 22.048 & \textbf{22.404} \\
 & Bedroom & 24.825 & 25.595 & 24.854 & \textbf{25.596} \\
\hline
\multirow{2}{*}{Inpainting}
 & Church  & 24.724 & 24.843 & 24.743 & \textbf{24.844} \\
 & Bedroom & 28.669 & 29.156 & 28.559 & \textbf{29.162} \\
\hline
\end{tabular}
\caption{PSNR for Gaussian deblurring and inpainting.  
(A) Adam + variance, (B) Adam + \textsc{HoldoutTopK}, (C) \textsc{AdamSphere} + variance, (D) \textsc{AdamSphere} + \textsc{HoldoutTopK}.}
\label{tab:ablation}
\end{table}

We also include a small ablation study of our new optimizer, \textsc{AdamSphere}, and our early-stopping strategy, \textsc{HoldoutTopK}; the results are reported in Table~\ref{tab:ablation}. They indicate that \textsc{AdamSphere} matches the optimization performance of standard Adam, while explicitly keeping the noise near the typical Gaussian shell, which has been found important for high-quality generation \cite{yang2024guidance, mannering2025noise}. Meanwhile, \textsc{HoldoutTopK} consistently improves over the variance-based early-stopping baseline used in \citet{wang2024dmplug}. Thus, our strategy offers a simple, effective alternative for reducing overfitting.

\subsection{Failure Modes}\label{subsec:failure}

We examine the limitations of weak priors. To study this, we consider \emph{box inpainting} and \emph{large-scale super-resolution}, which are closer to generation than reconstruction. For example, box inpainting requires filling an entire region.

In both experiments, the goal is to recover church images. As before, DPS uses a full $1000$-step DDPM prior trained on Church, while our method uses a $3$-step DDIM prior trained on Church (in-domain) and on CelebA (out-of-domain), respectively. For box inpainting, we mask a contiguous square region whose size increases across trials, and compare our method against the DPS baseline. Results are plotted in Figure ~\ref{fig:LPIPS-failure} and recorded in Table ~\ref{tab:box_results} in Appendix \ref{sec:additional experiment}. We observe a clear failure mode: as the masked fraction increases, our method degrades and performs worse than DPS, with the performance gap growing as masking becomes more severe. In particular, the out-of-domain prior fails completely, as reflected in both the metrics and Figure~\ref{fig:failure_visual} top row. The filled region has a face-like appearance that is clearly inconsistent with the church content. Similar observations appear for super-resolution. As the super-resolution factor increases, performance degrades sharply and essentially all methods fail. However, as shown in Figure~\ref{fig:failure_visual} bottom row, DPS still generates higher-quality samples due to its stronger prior. In contrast, weak priors produce blurry, inconsistent samples.

\begin{figure}[H]
  \centering
  \includegraphics[width=\linewidth]{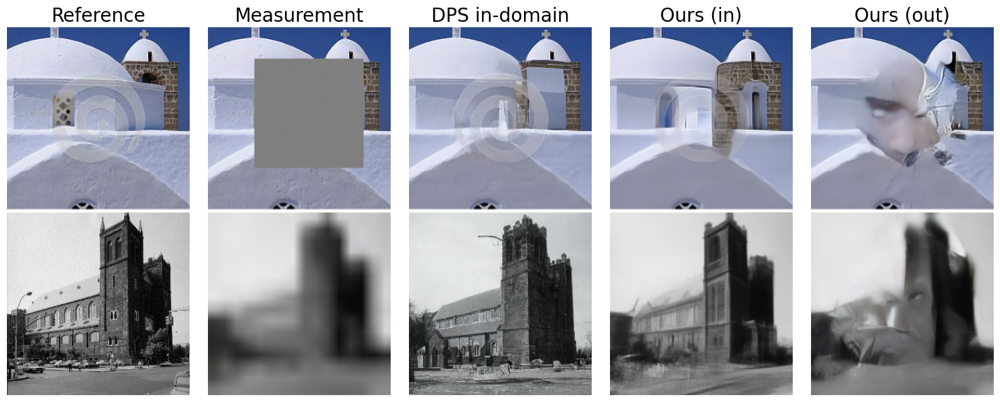}
  \caption{Visual comparison on box inpainting and super-resolution tasks. \textbf{Top row}: box inpainting with a $0.6 \times 0.6$ mask. \textbf{Bottom row}: $16\times$ super-resolution.}
  \label{fig:failure_visual}
\end{figure}

\begin{figure}
    \centering
    \includegraphics[width=0.7\linewidth]{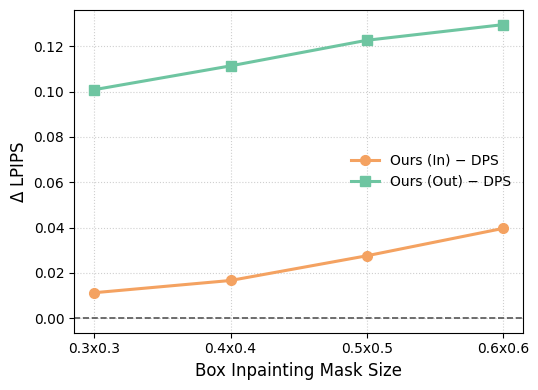}
    \caption{Difference in LPIPS between our method and the DPS baseline for different mask sizes. The dashed horizontal line denotes equal performance with DPS.}
    \label{fig:LPIPS-failure}
\end{figure}

\subsection{Latent Diffusion Applications on ImageNet}

We test few-step latent diffusion priors using Stable
Diffusion 2.1 ~\citep{rombach2022high} and DiT~\cite{peebles2023scalable}. These results, reported in
Table~\ref{tab:latent_results} in Appendix \ref{subsec: additional latent result}, show the same qualitative pattern:
few-step latent priors  still give accurate reconstructions in
data-informative settings.

\section{Conclusion, Practical Guide, and Discussion}\label{sec:conclusion}
This work studies when inverse-problem performance is robust to the choice of
prior. We show that weak priors, arising either from low-fidelity generators or
from training--reconstruction domain mismatch, can still perform well across
many inverse problems. To explain this phenomenon, we combine
Bayesian-consistency theory with local-correlation diagnostics. The theory
shows that sufficiently informative measurements can reduce sensitivity to the
prior by concentrating the posterior near a measurement-consistent signal. The
correlation diagnostics show that weak and mismatched natural-image priors can
still share local spatial structure with stronger matched priors. Together,
these results explain why weak priors can work in data-informative regimes and
why they fail when measurements provide too few anchors.

Practically, our results suggest a simple guide: weak priors are useful defaults
when measurements provide many local anchors, as in random inpainting,
deblurring, and moderate super-resolution. In these settings, reconstruction is
driven largely by the observation, while the prior helps propagate local
constraints through shared spatial structure. Stronger matched priors become
more important when the observation leaves a large null space, has high noise,
or requires global semantic completion, as in large box inpainting or
high-factor super-resolution.

Looking forward, this phenomenon calls for further study of algorithms tailored
to weak priors, especially few-step priors. Despite the strong performance of
initial-noise optimization, existing work remains limited and largely centers on
this single approach. Potential algorithmic improvements include hybrid methods
that combine measurement injection with noise optimization, and more effective sampling procedures. On the
theoretical side, further work is needed to pin down sharp thresholds that
determine when measurements are informative enough for weak priors to be
effective.

\section*{Acknowledgement}
The authors thank Haochen Ji and Qian Qin for helpful discussions.
The authors thank the four anonymous reviewers for their insightful suggestions during the rebuttal process.
Guanyang Wang
and Jing Jia acknowledge support from the National Science Foundation through grant
DMS–2210849 and an Adobe Data Science Research Award. Liyue Shen acknowledges
funding support by NSF (National Science Foundation) via grants IIS-2435746, Defense
Advanced Research Projects Agency (DARPA) under Contract No. HR00112520042,
as well as the University of Michigan MIDAS PODS Grant Award.

\section*{Impact Statement}
This work aims to improve the understanding of diffusion-model priors for inverse problems. It may help reduce the need for expensive or domain-specific generative models when measurements are sufficiently informative. At the same time, our results show that weak or mismatched priors can fail in low-information regimes, such as large box inpainting or high-factor super-resolution, where reconstructions may contain plausible but incorrect structure. In high-stakes settings such as medical or scientific imaging, such reconstructions should therefore be used only with appropriate validation, uncertainty checks, and domain expertise. 
\bibliography{example_paper}
\bibliographystyle{icml2026}
\newpage
\appendix
\onecolumn
\section{Additional theoretical results and proofs
}\label{sec:additional theory}
\subsection{Posterior consistency under i.i.d. data}\label{subsec: iid consistency}
The following theorem gives posterior consistency for i.i.d.\ data. It goes back to \citet{schwartz1965bayes}; see also Chapter~6 of \citet{ghosal2017fundamentals} for extensions.

\begin{theorem}[Posterior consistency]\label{thm:posterior-consistency-schwartz}
Let $(\mathcal X,d)$ be a metric space and let $\{P_x\}_{x\in\mathcal X}$ be a dominated i.i.d.\ model: there exists a $\sigma$-finite measure $\mu$ such that
$P_x\ll \mu$ with density $p_x=\frac{dP_x}{d\mu}$ for every $x\in\mathcal X$.
Let $Y_1,Y_2,\ldots$ be i.i.d.\ from $P_{x^\star}$, and let $\pi$ be a prior on $\mathcal X$.
Write $Y_{1:n}=(Y_1,\ldots,Y_n)$ and
\[
\pi(A\mid Y_{1:n})
=\frac{\int_A \prod_{i=1}^n p_x(Y_i)\,\pi(dx)}{\int_{\mathcal X} \prod_{i=1}^n p_x(Y_i)\,\pi(dx)} .
\]

Assume:

\noindent (i) \textbf{KL support at the truth:} for every $\varepsilon>0$,
\[
\pi\!\left(\left\{x\in\mathcal X:\ \bKL(P_{x^\star},P_x)<\varepsilon\right\}\right)>0,
\qquad
\bKL(P_{x^\star},P_x):=\int \log\!\Big(\frac{p_{x^\star}}{p_x}\Big)\, dP_{x^\star},
\]
where $\bKL$ stands for the Kullback-Leibler (KL) divergence.

\noindent (ii) \textbf{Existence of uniformly consistent tests:} for every $\delta>0$ there exists a sequence of tests
$\phi_n=\phi_n(Y_{1:n})\in[0,1]$ such that
\[
\bE_{x^\star}[\phi_n]\to 0
\quad\text{and}\quad
\sup_{x:\, d(x,x^\star)>\delta} \bE_x[1-\phi_n]\to 0
\qquad\text{as }n\to\infty .
\]

Then the posterior is (strongly) consistent at $x^\star$: for every $\delta>0$,
\[
\pi\!\left(\{x\in\mathcal X:\ d(x,x^\star)>\delta\}\mid Y_{1:n}\right)\to 0
\quad\text{almost surely under }P_{x^\star}^{\infty}.
\]
\end{theorem}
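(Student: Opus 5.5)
The plan is the classical Schwartz argument. Fix $\delta>0$ and write $U^c=\{x: d(x,x^\star)>\delta\}$. Normalize the likelihood by the truth, setting $R_n(x)=\prod_{i=1}^n p_x(Y_i)/p_{x^\star}(Y_i)$, so that
\[
\pi(U^c\mid Y_{1:n})=\frac{\int_{U^c}R_n(x)\,\pi(dx)}{\int_{\mathcal X}R_n(x)\,\pi(dx)}.
\]
We then split this as $\phi_n + (1-\phi_n)\,\pi(U^c\mid Y_{1:n})$. It suffices to show three things: (a) the denominator is not exponentially small; (b) the test term $\phi_n$ vanishes; and (c) the numerator multiplied by $(1-\phi_n)$ is exponentially small.

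\textbf{Denominator, from (i).} Fix $\eta>0$ and let $K_\eta=\{x:\bKL(P_{x^\star},P_x)<\eta\}$, which has positive prior mass. First restrict the integral to $K_\eta$. Then apply Jensen to $\log$ of the normalized integral over $K_\eta$ and use Fubini. Together with the strong law of large numbers applied to $\frac1n\sum_i\log(p_{x^\star}/p_x)(Y_i)$, this gives almost surely
\[
\liminf_n \frac1n\log\int R_n\,d\pi \ge -\eta .
\]
That is, the denominator is at least $e^{-2n\eta}$ eventually. The standard route applies the SLLN to the averaged quantity $\int_{K_\eta}\log(p_{x^\star}/p_x)(Y_i)\,\pi(dx)/\pi(K_\eta)$, which is i.i.d. with mean $<\eta$. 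This avoids uniformity issues in $x$.

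\textbf{Tests, from (ii).} Convergence of $\bE_{x^\star}\phi_n\to0$ and of the type-II error is only in expectation and without a rate. So the first step is to upgrade the tests to exponentially consistent ones, with $\bE_{x^\star}\phi_n\le e^{-cn}$ and $\sup_{U^c}\bE_x(1-\phi_n)\le e^{-cn}$. The standard device is to fix $k$ with both errors below $1/4$ for the test $\phi_k$. We then split the $n$ observations into $\lfloor n/k\rfloor$ blocks and take a majority vote of $\phi_k$ over the blocks (randomized tests are thresholded at $1/2$). Hoeffding's inequality then yields the exponential bounds. With these in hand, Markov and Borel--Cantelli give $\phi_n\le e^{-cn/2}$ eventually a.s.

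\textbf{Numerator.} By Fubini, and since $\bE_{x^\star}[R_n(x)g(Y_{1:n})]=\bE_x[g]$ for $g\ge0$,
\[
\bE_{x^\star}\!\left[\int_{U^c}(1-\phi_n)R_n\,d\pi\right]\le\sup_{U^c}\bE_x(1-\phi_n)\le e^{-cn}.
\]
Markov and Borel--Cantelli then show this quantity is at most $e^{-cn/2}$ eventually a.s. Now choose $\eta<c/8$. The ratio of the numerator bound to the denominator bound is at most $e^{-cn/2+2n\eta}\to0$, and the test term is also exponentially small. Hence $\pi(U^c\mid Y_{1:n})\to0$ almost surely.

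The main obstacle is the upgrade from mere uniform consistency to exponential consistency, since the almost-sure conclusion needs summable error rates. The blocking plus Hoeffding argument handles it, with attention to measurability of the supremum.
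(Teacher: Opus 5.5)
Your proposal is correct. It is the classical Schwartz argument: Jensen plus the SLLN on the prior-averaged log-likelihood ratio over the KL neighborhood bounds the denominator from below, blocking plus Hoeffding upgrades the uniformly consistent tests to exponentially consistent ones, and Fubini, Markov and Borel--Cantelli control the numerator. The paper gives no proof of its own and simply defers to Schwartz (1965) and Chapter~6 of Ghosal and van der Vaart, whose proof follows this same route. The one slip is that $\bE_{x^\star}[R_n(x)g(Y_{1:n})]=\bE_x\bigl[g\,\mathbf{1}\{\prod_i p_{x^\star}(Y_i)>0\}\bigr]\le\bE_x[g]$ is in general only an inequality (equality needs $P_x\ll P_{x^\star}$), but the inequality is all your numerator bound uses.
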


\subsection{Close-formula for Gaussian mixture posterior}\label{subsec:GoM-posterior}
\begin{proposition}\label{prop:GoM-posterior}
Given the measurement model $y = Ax + \epsilon,$ where $\epsilon \sim \bN(0,\sigma^2 I_m)$, and a Gaussian mixture prior
\[
\pi(x)=\sum_{j=1}^M w_j\,\varphi~\!\big(x;\mu_j,\Sigma_j\big),
\qquad w_j\ge 0,\ \sum_{j=1}^M w_j=1,
\]
where each $\Sigma_j\in\mathbb{R}^{n\times n}$ is symmetric positive definite.
Then the posterior $\pi(x\mid y)$ is also a Gaussian mixture:
\[
\pi(x\mid y)=\sum_{j=1}^M \widetilde w_j(y)\,\varphi~\!\big(x; m_j(y), C_j\big),
\]
where, 
\begin{align*}
C_j
&:=\Big(\Sigma_j^{-1}+\tfrac{1}{\sigma^2}A^\top A\Big)^{-1},\\
m_j(y)
&:=C_j\Big(\Sigma_j^{-1}\mu_j+\tfrac{1}{\sigma^2}A^\top y\Big).
\end{align*}
The updated weights satisfy
\[
\widetilde w_j(y)
=\frac{w_j\,\varphi~\!\big(y;A\mu_j,\ \sigma^2 I_m + A\Sigma_j A^\top\big)}
{\sum_{\ell=1}^M w_\ell\,\varphi~\!\big(y;A\mu_\ell,\ \sigma^2 I_m + A\Sigma_\ell A^\top\big)}.
\]
\end{proposition}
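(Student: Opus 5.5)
The plan is to apply Bayes' rule one mixture component at a time, and then to reduce each component to the standard linear-Gaussian conjugate update.

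Write the joint density of $(x,y)$ as $\pi(x)\,\varphi(y;Ax,\sigma^2 I_m)=\sum_{j=1}^M w_j\,\varphi(x;\mu_j,\Sigma_j)\,\varphi(y;Ax,\sigma^2 I_m)$. The posterior is this joint density divided by the marginal $p(y)=\int \pi(x)\varphi(y;Ax,\sigma^2 I_m)\,dx$. It therefore suffices to prove a single-component identity: for each $j$,
\[
\varphi(x;\mu_j,\Sigma_j)\,\varphi(y;Ax,\sigma^2 I_m)=\varphi\big(y;A\mu_j,\sigma^2 I_m+A\Sigma_jA^\top\big)\,\varphi\big(x;m_j(y),C_j\big).
\]
Summing over $j$ with weights $w_j$ and integrating out $x$ gives $p(y)=\sum_\ell w_\ell\,\varphi(y;A\mu_\ell,\sigma^2 I_m+A\Sigma_\ell A^\top)$. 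Dividing the joint density by $p(y)$ then yields exactly the stated mixture form together with the stated weights $\widetilde w_j(y)$.

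To prove the single-component identity, I would view it as the factorization $p_j(x,y)=p_j(y)\,p_j(x\mid y)$ of the joint law of $(x,y)$ under the model $x\sim\bN(\mu_j,\Sigma_j)$, $y=Ax+\epsilon$. Under this model $(x,y)$ is jointly Gaussian, and $y$ has mean $A\mu_j$ and covariance $A\Sigma_jA^\top+\sigma^2 I_m$. This immediately gives the marginal factor $\varphi(y;A\mu_j,\sigma^2 I_m+A\Sigma_jA^\top)$.

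For the conditional factor, I would complete the square in $x$. The exponent of the left-hand side is
\[
-\tfrac12\Big[x^\top\big(\Sigma_j^{-1}+\sigma^{-2}A^\top A\big)x-2x^\top\big(\Sigma_j^{-1}\mu_j+\sigma^{-2}A^\top y\big)\Big]+\text{(terms free of }x).
\]
Reading off the precision and the linear term gives the conditional covariance $C_j$ and conditional mean $m_j(y)$ as stated. Because $\Sigma_j\succ0$, $C_j^{-1}$ is positive definite, so $C_j$ exists and $\varphi(x;m_j(y),C_j)$ is a valid Gaussian density.

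The step that needs the most care is checking that the normalizing constants line up exactly, so that the leftover $x$-free factor equals the Gaussian marginal density of $y$ and not just something proportional to it. The joint-Gaussian argument avoids this bookkeeping: a joint density factors exactly into a marginal times a conditional, and each of those is Gaussian with the parameters identified above. If one prefers a direct computation instead, the Woodbury identity and the matrix determinant lemma give
\[
\det(\Sigma_j)\det(C_j^{-1})\,\sigma^{2m}=\det(\sigma^2 I_m+A\Sigma_jA^\top),
\]
which confirms that the constants match.
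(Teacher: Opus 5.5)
Your proposal is correct and follows essentially the same route as the paper: Bayes' rule applied component by component, together with the exact joint-Gaussian factorization $\varphi(x;\mu_j,\Sigma_j)\,\varphi(y;Ax,\sigma^2 I_m)=\varphi(y;A\mu_j,\sigma^2 I_m+A\Sigma_jA^\top)\,\varphi(x;m_j(y),C_j)$. The only difference is minor: you read off $C_j$ and $m_j(y)$ directly in information form by completing the square, whereas the paper starts from the covariance-form Gaussian conditioning formulas and converts them to this form with the Woodbury identity.
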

\begin{proof}
    The posterior density 
    \[
\pi(x\mid y)=\frac{p(y\mid x)\,\pi(x)}{\int p(y\mid x)\,\pi(x)\,dx}
=\frac{\sum_{j=1}^M w_j\,\varphi(y;Ax,\sigma^2 I_m)\,\varphi(x;\mu_j,\Sigma_j)}
{\sum_{j=1}^M w_j\int \varphi(y;Ax,\sigma^2 I_m)\,\varphi(x;\mu_j,\Sigma_j)\,dx}.
\]
Thus it suffices to analyze, for each fixed $j$, the product
\[
\varphi(y;Ax,\sigma^2 I_m)\,\varphi(x;\mu_j,\Sigma_j).
\]

Fix $j$. Consider the hierarchical model
\[
X\sim \bN(\mu_j,\Sigma_j),\qquad Y\mid X=x \sim \bN(Ax,\sigma^2 I_m).
\]
It is known that $(X,Y)$ is jointly Gaussian with mean $(\mu_j,\,A\mu_j)$ and covariance
\[
\begin{pmatrix}
\Sigma_j & \Sigma_j A^\top\\
A\Sigma_j & A\Sigma_j A^\top + \sigma^2 I_m
\end{pmatrix}.
\]
In particular, the marginal distribution of $Y$ under component $j$ is
\[
p_j(y):=\int \varphi(y;Ax,\sigma^2 I_m)\,\varphi(x;\mu_j,\Sigma_j)\,dx
= \varphi\!\big(y;A\mu_j,\ \sigma^2 I_m + A\Sigma_j A^\top\big).
\]
Also, the conditional distribution of $X$ given $Y=y$ under component $j$ is Gaussian with covariance
\[
\widetilde C_j
=\Sigma_j-\Sigma_j A^\top\big(\sigma^2 I_m+A\Sigma_j A^\top\big)^{-1}A\Sigma_j
\]
and mean
\[
\widetilde m_j(y)
=\mu_j+\Sigma_j A^\top\big(\sigma^2 I_m+A\Sigma_j A^\top\big)^{-1}(y-A\mu_j).
\]
Using the Woodbury identity, one can check that
\[
\widetilde C_j
=\Big(\Sigma_j^{-1}+\tfrac{1}{\sigma^2}A^\top A\Big)^{-1}
=:C_j,
\]
and substituting this into the expression for $\widetilde m_j(y)$ gives the equivalent form
\[
\widetilde m_j(y)
=C_j\Big(\Sigma_j^{-1}\mu_j+\tfrac{1}{\sigma^2}A^\top y\Big)
=:m_j(y).
\]
Therefore, for each $j$ we have the factorization of the joint density
\[
\varphi(y;Ax,\sigma^2 I_m)\,\varphi(x;\mu_j,\Sigma_j)
= p_j(y)\,\varphi\!\big(x;m_j(y),C_j\big),
\]
where $p_j(y)=\varphi\!\big(y;A\mu_j,\sigma^2 I_m + A\Sigma_j A^\top\big)$.

Plugging this into Bayes' rule yields
\[
\pi(x\mid y)
=\frac{\sum_{j=1}^M w_j\,p_j(y)\,\varphi(x;m_j(y),C_j)}
{\sum_{\ell=1}^M w_\ell\,p_\ell(y)}
=\sum_{j=1}^M \widetilde w_j(y)\,\varphi(x;m_j(y),C_j),
\]
with
\[
\widetilde w_j(y)
=\frac{w_j\,p_j(y)}{\sum_{\ell=1}^M w_\ell\,p_\ell(y)}
=\frac{w_j\,\varphi\!\big(y;A\mu_j,\ \sigma^2 I_m + A\Sigma_j A^\top\big)}
{\sum_{\ell=1}^M w_\ell\,\varphi\!\big(y;A\mu_\ell,\ \sigma^2 I_m + A\Sigma_\ell A^\top\big)}.
\]
This is exactly the stated Gaussian-mixture posterior with the given component means, covariances, and updated weights.

\end{proof}

\subsection{Posterior collapse under heterogeneous variances}
Here we prove a more general version of Theorem \ref{thm:posterior-concentration-MoG} under a heterogeneous-variance assumption.

\paragraph{Setup:} Consider the measurement model
\[
y = Ax + \epsilon,\qquad \epsilon\sim \bN(0,\sigma^2 I_m),
\]
and the Gaussian mixture
\[
\pi(x)=\sum_{j=1}^M w_j\,\varphi\!\big(x;\mu_j,\tau_j^2 I_n\big),
\qquad w_j\ge 0,\ \sum_{j=1}^M w_j=1.
\]

We note that this setting is strictly more general than the equal-variance setting in our main text~\eqref{eqn: equal-weight gaussian mixture}, since each component is allowed to have its own covariance scale $\tau_j^2 I_n$. It reduces to the homogeneous case when $\tau_1=\tau_2=\cdots=\tau_M=\tau$.

We prove the following theorem:
\begin{theorem}[Posterior collapse to the best \emph{selection-score} mode]\label{thm:posterior-concentration-MoG-inhom}
Consider the measurement model and Gaussian mixture prior as above. 
Let
\[
\Sigma_j := \sigma^2 I_m + \tau_j^2 A A^\top .
\]
Define the \emph{selection score}
\begin{equation}\label{eq:selection-score}
\ell_j(y^\star)
\;:=\;
s_j(y^\star)\;+\;\frac12\log\det(\Sigma_j),
\qquad
s_j(y^\star):=\frac12\big\|\Sigma_j^{-1/2}(y^\star-A\mu_j)\big\|_2^2 .
\end{equation}
Let $\ell_{(1)}(y^\star)$ and $\ell_{(2)}(y^\star)$ be the smallest and second-smallest values among
$\{\ell_j(y^\star)\}_{j=1}^M$, and define the per-dimension selection-score gap
\begin{equation}\label{eq:per-dim-selection-gap}
\delta_\ell(y^\star)
\;:=\;
\frac{1}{m}\Big(\ell_{(2)}(y^\star)-\ell_{(1)}(y^\star)\Big).
\end{equation}
Assume:
\begin{enumerate}
\item (Bounded weight ratio) There exists $C>1$ such that $1/C\le w_i/w_j\le C$ for all $i,j$.
\item ($\delta_0$-identifiable in selection score) There is a unique minimizer
\[
j^\star=\arg\min_{1\le j\le M}\ell_j(y^\star),
\]
and $\delta_\ell(y^\star)\ge \delta_0$ for some constant $\delta_0>0$.
\end{enumerate}
Let $J$ be the posterior component index, i.e.
\[
\Pr(J=j)=\widetilde w_j(y^\star)
:=\frac{w_j\,\varphi\!\big(y^\star;A\mu_j,\Sigma_j\big)}{\sum_{i=1}^M w_i\,\varphi\!\big(y^\star;A\mu_i,\Sigma_i\big)}.
\]
Then
\begin{itemize}
\item The probability of \emph{not} selecting the best selection-score component satisfies
\[
\Pr(J\neq j^\star)\ \le\ C M\,e^{-m\delta_0}.
\]
\item The posterior concentrates onto the $j^\star$-th Gaussian component at an exponential rate:
\[
\big\|\pi(\cdot\mid y^\star)-\bN(m_{j^\star}(y^\star),\Sigma_{\mathrm{post},j^\star})\big\|_{\mathsf{TV}}
\ \le\ C M\,e^{-m\delta_0},
\]
where
\[
\Sigma_{\mathrm{post},j}
=\Big(\tau_j^{-2}I_n+\tfrac{1}{\sigma^2}A^\top A\Big)^{-1},
\qquad
m_j(y^\star)
=\Sigma_{\mathrm{post},j}\Big(\tau_j^{-2}\mu_j+\tfrac{1}{\sigma^2}A^\top y^\star\Big).
\]
\end{itemize}
\end{theorem}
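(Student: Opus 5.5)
The plan is to treat the two claims in turn, using Proposition~\ref{prop:GoM-posterior} (with $\Sigma_j=\tau_j^2 I_n$) to put the posterior in explicit mixture form. Substituting $\Sigma_j=\tau_j^2 I_n$ there gives
\[
\pi(\cdot\mid y^\star)=\sum_j \widetilde w_j(y^\star)\,\bN\big(m_j(y^\star),\Sigma_{\mathrm{post},j}\big).
\]
The component covariance is $C_j=(\tau_j^{-2}I_n+\sigma^{-2}A^\top A)^{-1}=\Sigma_{\mathrm{post},j}$, and the component means coincide with $m_j(y^\star)$ as stated. The weights are $\widetilde w_j\propto w_j\,\varphi(y^\star;A\mu_j,\Sigma_j)$ with $\Sigma_j=\sigma^2I_m+\tau_j^2AA^\top$.

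The key observation is that the Gaussian log-density is exactly the negative selection score up to a common constant:
\[
\log\varphi(y^\star;A\mu_j,\Sigma_j)=-\tfrac m2\log(2\pi)-\tfrac12\log\det\Sigma_j-s_j(y^\star)=-\tfrac m2\log(2\pi)-\ell_j(y^\star).
\]
Hence $\widetilde w_j\propto w_j e^{-\ell_j}$, since the $(2\pi)^{-m/2}$ factor cancels in the normalization. This is precisely why the $\log\det$ term enters the definition of $\ell_j$. In the homogeneous case it is the same for all $j$ and drops out, which recovers Theorem~\ref{thm:posterior-concentration-MoG}.

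For the first bullet, I bound
\[
\Pr(J\neq j^\star)=\frac{\sum_{j\ne j^\star}w_je^{-\ell_j}}{\sum_i w_ie^{-\ell_i}}\le \sum_{j\neq j^\star}\frac{w_j}{w_{j^\star}}e^{-(\ell_j-\ell_{j^\star})}.
\]
The inequality drops every term of the denominator except $i=j^\star$. Each weight ratio is at most $C$ by Assumption~1. Each exponent satisfies $\ell_j-\ell_{j^\star}\ge \ell_{(2)}-\ell_{(1)}\ge m\delta_0$ by Assumption~2 and the uniqueness of the minimizer. Summing over at most $M-1$ indices gives $\Pr(J\neq j^\star)\le C(M-1)e^{-m\delta_0}\le CMe^{-m\delta_0}$.

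For the TV bound, write $\pi(\cdot\mid y^\star)=\widetilde w_{j^\star}P_{j^\star}+(1-\widetilde w_{j^\star})Q$, where $P_{j^\star}=\bN(m_{j^\star},\Sigma_{\mathrm{post},j^\star})$ and $Q$ is the renormalized mixture of the remaining components. Then
\[
\|\pi(\cdot\mid y^\star)-P_{j^\star}\|_{\mathsf{TV}}=(1-\widetilde w_{j^\star})\|Q-P_{j^\star}\|_{\mathsf{TV}}\le 1-\widetilde w_{j^\star}=\Pr(J\neq j^\star).
\]
The result then follows from the first bullet. If $\widetilde w_{j^\star}=1$ the claim is trivial.

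There is no real obstacle here. The only step that needs care is the bookkeeping that identifies the marginal likelihood with $e^{-\ell_j}$, in particular making sure the determinant factor is exactly $\tfrac12\log\det\Sigma_j$ and that the common constant cancels. I would also state the trivial case $M=1$ separately, so that the "second-smallest" value $\ell_{(2)}$ is well defined.
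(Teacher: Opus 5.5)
Your proposal is correct and follows essentially the same route as the paper: you identify $\widetilde w_j \propto w_j e^{-\ell_j}$, bound $\Pr(J\neq j^\star)$ by the sum of ratios $\widetilde w_j/\widetilde w_{j^\star}$ (using $\widetilde w_{j^\star}\le 1$, the weight-ratio bound $C$, and the gap $\ell_j-\ell_{j^\star}\ge m\delta_0$), and then bound the TV distance by the off-component posterior mass. The only cosmetic difference is in the TV step: you write the posterior as $\widetilde w_{j^\star}P_{j^\star}+(1-\widetilde w_{j^\star})Q$, whereas the paper applies the triangle inequality component by component; both give the same bound.
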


\begin{remark}
 Theorem \ref{thm:posterior-concentration-MoG-inhom} reduces to Theorem \ref{thm:posterior-concentration-MoG}  when $\tau_1 = \ldots = \tau_M = \tau$.
\end{remark}

\begin{proof}
By Proposition \ref{prop:GoM-posterior}, the posterior weights satisfy
\[
\widetilde w_j(y^\star)\ \propto\ w_j\,\varphi\!\big(y^\star;A\mu_j,\Sigma_j\big),
\qquad
\Sigma_j=\sigma^2 I_m+\tau_j^2 AA^\top.
\]
Fix $j\neq j^\star$. Using the Gaussian density formula,
\begin{align}
\frac{\widetilde w_j(y^\star)}{\widetilde w_{j^\star}(y^\star)}
&=
\frac{w_j}{w_{j^\star}}
\frac{\varphi\!\big(y^\star;A\mu_j,\Sigma_j\big)}{\varphi\!\big(y^\star;A\mu_{j^\star},\Sigma_{j^\star}\big)}
\nonumber\\
&=
\frac{w_j}{w_{j^\star}}
\Big(\frac{\det(\Sigma_{j^\star})}{\det(\Sigma_j)}\Big)^{1/2}
\exp\!\Big(-s_j(y^\star)+s_{j^\star}(y^\star)\Big)
\nonumber\\
&=
\frac{w_j}{w_{j^\star}}
\exp\!\Big(
-\big[s_j(y^\star)+\tfrac12\log\det(\Sigma_j)\big]
+\big[s_{j^\star}(y^\star)+\tfrac12\log\det(\Sigma_{j^\star})\big]
\Big)
\nonumber\\
&=
\frac{w_j}{w_{j^\star}}
\exp\!\Big(-\big(\ell_j(y^\star)-\ell_{j^\star}(y^\star)\big)\Big).
\label{eq:ratio-ell}
\end{align}
By the bounded weight-ratio assumption, $w_j/w_{j^\star}\le C$ for all $j$, hence
\begin{equation}\label{eq:ratio-ell-bound}
\frac{\widetilde w_j(y^\star)}{\widetilde w_{j^\star}(y^\star)}
\le
C\,\exp\!\Big(-\big(\ell_j(y^\star)-\ell_{j^\star}(y^\star)\big)\Big).
\end{equation}

Next, note that
\[
\Pr(J\neq j^\star)
=\sum_{j\neq j^\star}\widetilde w_j(y^\star)
=
\widetilde w_{j^\star}(y^\star)\sum_{j\neq j^\star}\frac{\widetilde w_j(y^\star)}{\widetilde w_{j^\star}(y^\star)}
\le
\sum_{j\neq j^\star}\frac{\widetilde w_j(y^\star)}{\widetilde w_{j^\star}(y^\star)},
\]
since $\widetilde w_{j^\star}(y^\star)\le 1$. Combining with \eqref{eq:ratio-ell-bound} gives
\[
\Pr(J\neq j^\star)
\le
C\sum_{j\neq j^\star}\exp\!\Big(-\big(\ell_j(y^\star)-\ell_{j^\star}(y^\star)\big)\Big).
\]
By definition of $\ell_{(2)}$ and $\ell_{(1)}=\ell_{j^\star}$, we have
$\ell_j(y^\star)-\ell_{j^\star}(y^\star)\ge \ell_{(2)}(y^\star)-\ell_{(1)}(y^\star)$ for all $j\neq j^\star$.
Therefore,
\[
\Pr(J\neq j^\star)
\le
C(M-1)\exp\!\Big(-\big(\ell_{(2)}(y^\star)-\ell_{(1)}(y^\star)\big)\Big)
\le
CM\exp\!\big(-m\delta_\ell(y^\star)\big)
\le
CM\exp(-m\delta_0),
\]
which proves the first claim.

For the total-variation bound, write the posterior mixture as
\[
\pi(\cdot\mid y^\star)
=
\widetilde w_{j^\star}(y^\star)\,\bN(m_{j^\star}(y^\star),\Sigma_{\mathrm{post},j^\star})
+
\sum_{j\neq j^\star}\widetilde w_j(y^\star)\,\bN(m_j(y^\star),\Sigma_{\mathrm{post},j}).
\]
Let $\mu^\star:=\bN(m_{j^\star}(y^\star),\Sigma_{\mathrm{post},j^\star})$.
Then, using that $\|\nu_1-\nu_2\|_{\mathsf{TV}}\le 1$ for any probability measures $\nu_1,\nu_2$,
\[
\big\|\pi(\cdot\mid y^\star)-\mu^\star\big\|_{\mathsf{TV}}
=
\Big\|
\sum_{j\neq j^\star}\widetilde w_j(y^\star)\big(\bN(m_j,\Sigma_{\mathrm{post},j})-\mu^\star\big)
\Big\|_{\mathsf{TV}}
\le
\sum_{j\neq j^\star}\widetilde w_j(y^\star)\,
\big\|\bN(m_j,\Sigma_{\mathrm{post},j})-\mu^\star\big\|_{\mathsf{TV}}
\le
\sum_{j\neq j^\star}\widetilde w_j(y^\star).
\]
Hence
\[
\big\|\pi(\cdot\mid y^\star)-\mu^\star\big\|_{\mathsf{TV}}
\le \,\Pr(J\neq j^\star)
\le CM e^{-m\delta_0},
\]
as claimed.
\end{proof}

\subsection{Validating Assumption 2 in Theorem \ref{thm:posterior-concentration-MoG}}\label{subsec:justifying-assumption2}

We give a simple probabilistic argument showing that the $\delta_0$-identifiable condition in Assumption~2 holds with high probability for \emph{random inpainting}, provided the candidate means are separated in full-image MSE.

\paragraph{Setup.}
Let $A=P_{\Omega}$ be a coordinate projection onto a uniformly random subset $\Omega\subset[n]$ with $|\Omega|=m$.
Assume all pixel values are bounded: $|\mu_{j,i}|\le 1$ for all $j,i$.
Consider the Gaussian-mixture surrogate where the data are generated from a single component $j^\star$:
\[
x^\star=\mu_{j^\star}+\xi,\qquad \xi\sim \bN(0,\tau^2 I_n),\qquad
y^\star=P_\Omega x^\star+\epsilon,\qquad \epsilon\sim \bN(0,\sigma^2 I_m),
\]
so that
\[
y^\star=P_\Omega \mu_{j^\star}+\eta,\qquad \eta:=P_\Omega \xi+\epsilon\sim \bN\!\big(0,(\sigma^2+\tau^2)I_m\big).
\]
Recall that for inpainting,
\[
s_j(y^\star)=\frac{1}{2(\sigma^2+\tau^2)}\big\|y^\star-P_\Omega \mu_j\big\|_2^2.
\]

Assume there exists $\Delta>0$ such that for every $j\neq j^\star$,
\begin{equation}
\label{eq:global-sep}
\frac{1}{n}\big\|\mu_j-\mu_{j^\star}\big\|_2^2 \;\ge\; \Delta.
\end{equation}
This is a full-image (population) MSE separation between the correct mean and all others, which is natural because visually different images typically differ across many pixels.

Then we show the following:

\begin{proposition}[Identifiability for random inpainting]
\label{prop:identifiable-inpainting}
Under the setup above and \eqref{eq:global-sep}, define the per-dimension score gap
\[
\delta(y^\star)\;:=\;\min_{j\neq j^\star}\frac{s_j(y^\star)-s_{j^\star}(y^\star)}{m}.
\]
Then there is a universal constant $c>0$ such that
\[
\bP\!\left(\delta(y^\star)\;\ge\;\frac{\Delta}{8(\sigma^2+\tau^2)}\right)
\;\ge\;
1 \;-\; 2(M-1)\exp(-\frac{1}{32}\,m\Delta^2)\;-\;2(M-1)\exp\!\left(-\,\frac{m\Delta^2}{32(\sigma^2+\tau^2)}\right).
\]

In other words, the score will be $\frac{\Delta}{8(\sigma^2+\tau^2)}$-identifiable with probability tending to $1$ exponentially fast in $m$.
\end{proposition}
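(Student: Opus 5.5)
Fix $j\neq j^\star$ and write $d_j:=\mu_j-\mu_{j^\star}\in\bR^n$. Since $y^\star-P_\Omega\mu_j=\eta-P_\Omega d_j$, expanding the square gives
\[
2(\sigma^2+\tau^2)\big(s_j(y^\star)-s_{j^\star}(y^\star)\big)=\|P_\Omega d_j\|_2^2-2\langle \eta,P_\Omega d_j\rangle .
\]
So the per-dimension gap splits into a \emph{signal term} $\frac1m\|P_\Omega d_j\|^2$, which is random only through the mask, and a \emph{noise cross term} $\frac2m\langle\eta,P_\Omega d_j\rangle$, which is conditionally Gaussian given $\Omega$. The plan is to show two things with high probability: the signal term is at least $\Delta/2$, and the cross term is at most $\Delta/4$. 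On that event the difference above is at least $\Delta/4$, so $\delta(y^\star)\ge \frac{\Delta}{8(\sigma^2+\tau^2)}$. A union bound over the $M-1$ indices $j\neq j^\star$ then produces the two $2(M-1)\exp(\cdot)$ terms.

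\emph{Signal term.} Every entry satisfies $d_{j,i}^2\in[0,4]$, because all pixel values lie in $[-1,1]$. The quantity $\frac1m\|P_\Omega d_j\|^2$ is the average of $d_{j,i}^2$ over a uniformly random $m$-subset, and its mean is $\frac1n\|d_j\|^2\ge\Delta$. Hoeffding's inequality for sampling without replacement applies with entries bounded in $[0,4]$; Hoeffding's remark allows the with-replacement bound to be used verbatim. Taking deviation $\Delta/2$ gives probability at most $\exp(-2m(\Delta/2)^2/16)=\exp(-m\Delta^2/32)$. This matches the first exponential, with the factor $2$ coming from a two-sided version or simply being slack.

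\emph{Cross term.} Condition on $\Omega$. Then $\langle\eta,P_\Omega d_j\rangle\sim\bN(0,(\sigma^2+\tau^2)\|P_\Omega d_j\|^2)$, so the Gaussian tail bound gives
\[
\bP\Big(\tfrac2m|\langle\eta,P_\Omega d_j\rangle|>t\Big)\le 2\exp\!\Big(-\frac{m^2t^2}{8(\sigma^2+\tau^2)\|P_\Omega d_j\|^2}\Big).
\]
I would choose the threshold $t=\frac12\cdot\frac1m\|P_\Omega d_j\|^2$, so that the cross term eats at most half of the signal term; this threshold is relative to the signal, not the absolute value $\Delta/4$. With this choice the exponent becomes $m\cdot\frac1m\|P_\Omega d_j\|^2/(32(\sigma^2+\tau^2))$. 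On the signal event this is at least $m\Delta/(64(\sigma^2+\tau^2))$. Since $\Delta\le4$, this in turn dominates a bound of the form $m\Delta^2/(c(\sigma^2+\tau^2))$, which is the stated second exponential up to constants. On both events the difference is at least $\frac12\cdot\frac1m\|P_\Omega d_j\|^2\ge \Delta/4$, as required.

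\emph{Main obstacle.} The main difficulty is the bookkeeping of constants needed to hit exactly $\frac1{32}$ in the two exponents. The signal term's variance proxy must be handled carefully, and the Gaussian bound has to be applied conditionally and then integrated over $\Omega$, using the tower property restricted to the good mask event. If the exact constants do not line up, I would adjust the split between signal and noise thresholds, for example $\Delta/2$ versus $\Delta/4$. Doing so still yields a bound of the claimed form with a universal constant $c$, which the statement allows.
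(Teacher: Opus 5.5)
Your argument is correct and is essentially the paper's proof: the same residual decomposition, Hoeffding without replacement at deviation $\Delta/2$ for the mask term, a Gaussian tail conditional on $\Omega$ for the cross term, and a union bound over $j\neq j^\star$. The only change is that you use the relative threshold $\frac{1}{2m}\|P_\Omega d_j\|_2^2$ where the paper uses the absolute threshold $a=\Delta/4$ together with the crude bound $\|v_j\|_2^2\le 4m$, and your choice gives the stronger exponent $m\Delta/(64(\sigma^2+\tau^2))$ compared with the paper's $m\Delta^2/(512(\sigma^2+\tau^2))$.

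Do not try to hit the literal $\frac{1}{32}$ in the second exponential. The paper's own proof does not reach it, since substituting $a=\Delta/4$ gives $512$. That constant is also false in general: take $M=2$, $\mu_{j^\star}=\mathbf{1}$, $\mu_j=-\mathbf{1}$, $\Delta=4$, and $\sigma^2+\tau^2=1$. Then the failure probability equals $\Phi(-\tfrac34\sqrt{m})$, where $\Phi$ is the standard normal CDF. This decays like $e^{-9m/32}$ up to a polynomial factor, so it exceeds the claimed $4e^{-m/2}$ for large $m$. The ``up to a universal constant'' reading you adopted is therefore the right one.
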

\begin{proof}
We pick an arbitrary $j\neq j^\star$, write
\[
y^\star-P_\Omega \mu_j \;=\; P_\Omega(\mu_{j^\star}-\mu_j)+\eta.
\]
Therefore $y^\star-P_\Omega \mu_j  \sim \bN(P_\Omega(\mu_{j^\star}-\mu_j), (\sigma^2 + \tau^2) I_m)$. Let $v_j:=P_\Omega(\mu_{j^\star}-\mu_j)$. Expanding and cancelling the common $\|\eta\|_2^2$ term gives
\begin{align*}
s_j(y^\star)-s_{j^\star}(y^\star)
&=\frac{1}{2(\sigma^2+\tau^2)}\Big(\|v_j+\eta\|_2^2-\|\eta\|_2^2\Big)\\
&=\frac{1}{2(\sigma^2+\tau^2)}\Big(\|v_j\|_2^2+2\langle \eta,v_j\rangle\Big).
\end{align*}

Dividing by $m$, we have
\begin{equation}
\label{eq:gap-decomp}
\frac{s_j(y^\star)-s_{j^\star}(y^\star)}{m}
=\frac{1}{2(\sigma^2+\tau^2)}\left(
\frac{1}{m}\|v_j\|_2^2
+\frac{2}{m}\langle \eta,v_j\rangle
\right).
\end{equation}
Therefore,
\[
\bP\!\left(\frac{s_j(y^\star)-s_{j^\star}(y^\star)}{m}\;\ge\;\frac{\Delta}{8(\sigma^2+\tau^2)}\right) = \bP\!\left(\; \left(\frac{1}{m}\|v_j\|_2^2
+\frac{2}{m}\langle \eta,v_j\rangle\right) \geq \frac{\Delta}{4}\right)
\]

We first bound the mask term $\frac{1}{m}\|v_j\|_2^2$. 
Define the population average
\[
d_j:=\frac{1}{n}\|\mu_{j^\star}-\mu_j\|_2^2,
\qquad
\widehat d_j:=\frac{1}{m}\|P_\Omega(\mu_{j^\star}-\mu_j)\|_2^2=\frac{1}{m}\|v_j\|_2^2.
\]
Each summand $(\mu_{j^\star,i}-\mu_{j,i})^2$ lies in $[0,4]$. Since $\Omega$ is a uniform sample without replacement,
Hoeffding's inequality (sampling without replace version, see \citet{hoeffding1963probability} or Proposition 1.2 of \citet{bardenet2015concentration}) yields, for any $t>0$,
\[
\bP\big(|\widehat d_j-d_j|\ge t\big)\le 2\exp\!\left(-\,\frac{m t^2}{8}\right).
\]
Taking $t=\Delta/2$ and using $d_j\ge \Delta$ from \eqref{eq:global-sep} gives
\[
\bP\big(\widehat d_j \le \Delta/2\big)\le 2\exp\left(- \frac{m\Delta^2}{32}\right).
\]

Now we bound the noise term. 
Conditional on $\Omega$, $\langle \eta,v_j\rangle$ is Gaussian with mean $0$ and variance $(\sigma^2+\tau^2)\|v_j\|_2^2$.
Hence, for any $a>0$, the classical Gaussian concentration bound implies
\[
\bP\!\left(\left|\frac{2}{m}\langle \eta,v_j\rangle\right|\ge a \,\middle|\, \Omega\right)
\le 2\exp\!\left(-\,\frac{m^2 a^2}{8(\sigma^2+\tau^2)\|v_j\|_2^2}\right).
\]
Using the coordinate bound $|(\mu_{j^\star,i}-\mu_{j,i})|\le 2$ gives $\|v_j\|_2^2\le 4m$, so
\[
\bP\!\left(\left|\frac{2}{m}\langle \eta,v_j\rangle\right|\ge a \,\middle|\, \Omega\right)
\le 2\exp\!\left(-\,\frac{m a^2}{32(\sigma^2+\tau^2)}\right).
\]
Taking $a=\Delta/4$,  on the event $\{\widehat d_j\ge \Delta/2\}$ and $\left\{\left|\frac{2}{m}\langle \eta,v_j\rangle\right|\le \Delta/4\right\}$,
the quantity
\[
\frac{1}{m}\|v_j\|_2^2
+\frac{2}{m}\langle \eta,v_j\rangle
\]
in \eqref{eq:gap-decomp} is at least $\Delta/4$, so
\[
\bP\!\left(\; \left(\frac{1}{m}\|v_j\|_2^2
+\frac{2}{m}\langle \eta,v_j\rangle\right) \geq \frac{\Delta}{4}\right) \geq 1 - 2\exp\left(- \frac{m\Delta^2}{32}\right) - 2\exp\!\left(-\,\frac{m a^2}{32(\sigma^2+\tau^2)}\right).
\]
A union bound over $j\neq j^\star$ completes the proof.
    
\end{proof}

\section{Details of \textsc{AdamSphere} and \textsc{HoldoutTopK}}
\label{sec:adamsphere and holdout details}
\paragraph{\textsc{AdamSphere} optimizer.}
When optimizing the initial noise variable $z$ in diffusion-based inverse problems, we  enforce the spherical constraint $\|z\|_2=r$ (with $r$ fixed, or set to $\|z^{(0)}\|_2$ at initialization). This prevents the optimizer from trivially changing the noise magnitude and keeps the search within the typical set of the Gaussian prior. \textsc{AdamSphere} optimizer adapts Adam to this constraint: at each iteration, we first project the Euclidean gradient onto the tangent space of the sphere at the current iterate, apply Adam’s moment updates using this tangent gradient, and then project the preconditioned search direction back to the tangent space. Finally, we take a step along this tangent direction and retract back to the sphere. 

We use the simple normalization retraction by default (rescale the updated vector to norm $r$), and also consider an exponential-map retraction that moves along a great circle on the sphere (on the sphere $\mathbb{S}^{d-1}_r=\{z:\|z\|_2=r\}$, for a tangent direction
$u$ satisfying $\langle u,z\rangle=0$, the exponential-map retraction with
step size $\eta$ is
\[
\mathrm{Exp}_z(\eta u)
=
z\cos\!\left(\frac{\eta\|u\|_2}{r}\right)
+
r\frac{u}{\|u\|_2}
\sin\!\left(\frac{\eta\|u\|_2}{r}\right),
\]
with the case $u=0$ defined by continuity as $\mathrm{Exp}_z(0)=z$.)

\paragraph{\textsc{HoldoutTopK} early stopping.}
When optimizing the initial noise, later iterates can begin to overfit the observed pixels: the measurement error keeps decreasing, but visual quality (or metrics such as PSNR/LPIPS) may deteriorate. To avoid overfitting, we use a simple holdout-based early-stopping rule, \textsc{HoldoutTopK}. We randomly split the observed index set $\Omega$ into a \emph{fit} subset $\Omega_{\mathrm{fit}}$ used for optimization and a disjoint \emph{holdout} subset $\Omega_{\mathrm{ho}}$ that is never used in gradient updates. At each iteration $t$, we evaluate a holdout score (e.g., the MSE on $\Omega_{\mathrm{ho}}$) and keep the $K$ iterates with the lowest holdout scores. We then output an iterate chosen from this top-$K$ set (e.g., the best within the top-$K$, or uniformly at random among them). 

Our idea is inspired by standard practice in statistical machine learning, where a validation set is used to approximate test performance and guide model selection. However, unlike standard model selection that picks the single lowest-error candidate, we keep the top-$K$ lowest-error iterates and eventually return the latest iterate among these top $K$. Using $K>1$ stabilizes selection by avoiding the risk of picking a single spurious ``best'' iterate due to holdout noise, while still favoring iterates that consistently generalize well to unseen pixels. In practice, \textsc{HoldoutTopK} provides a lightweight and robust way to reduce overfitting.

\section{Additional experiment results and experiment details}\label{sec:additional experiment}

\subsection{Cross-Domain Inverse Problem Solving}\label{subsec:additional cross domain result}

Table \ref{tab:cross-domain_gaussian_nonlinear} presents additional cross-domain results for Gaussian and nonlinear deblurring. Consistent with Section \ref{subsec:cross-domain}, these experiments further confirm that a weak diffusion prior, such as a few-step or domain-mismatched one, remains highly competitive with the DPS baseline. For Gaussian deblurring, DPS performs well in-domain, but our method frequently surpasses it in PSNR and SSIM despite using substantially weaker priors. With the more challenging nonlinear deblurring task, the same qualitative conclusions hold and further demonstrate that weak priors are effective and robust for inverse problems. 

\begin{table}
\small
\centering
\caption{Cross-Domain results for Gaussian deblurring and nonlinear deblurring measurements.
The “Model” column indicates the source domain of the pretrained diffusion model.
The “CelebA,” “Bedroom,” and “Church” columns indicate the target domains of the images being reconstructed.
DPS always uses an in-domain model, i.e., the source and target domains coincide.}\label{tab:cross-domain_gaussian_nonlinear}
\setlength{\tabcolsep}{3pt}
\begin{tabular}{lll|ccc|ccc|ccc}
\toprule
\multirow{2}{*}{\textbf{Task}} &
\multirow{2}{*}{\textbf{Method}} &
\multirow{2}{*}{\textbf{Model}} &
\multicolumn{3}{c|}{\textbf{CelebA}} &
\multicolumn{3}{c|}{\textbf{Bedroom}} &
\multicolumn{3}{c}{\textbf{Church}} \\

& &
& PSNR $\uparrow$ & LPIPS $\downarrow$ & SSIM $\uparrow$
& PSNR $\uparrow$ & LPIPS $\downarrow$ & SSIM $\uparrow$
& PSNR $\uparrow$ & LPIPS $\downarrow$ & SSIM $\uparrow$ \\

\midrule

\multirow{4}{*}{\textbf{Gaussian}} & DPS & In-domain
& 27.55 & \textbf{0.19} & 0.76
& 23.94 & 0.31 & 0.67
& 20.66 & \textbf{0.30} & 0.55 \\

\cmidrule(l){2-12}

& \multirow{3}{*}{Ours} & CelebA
& \textbf{29.98} & 0.21 & \textbf{0.82}
& 25.02 & 0.41 & 0.69
& 22.21 & 0.45 & 0.58 \\

&  & Bedroom
& 27.97 & 0.27 & 0.77
& \textbf{25.28} & \textbf{0.30} & \textbf{0.72}
& 22.09 & 0.38 & 0.59 \\

&  & Church
& 27.66 & 0.29 & 0.75
& 24.76 & 0.35 & 0.69
& \textbf{22.43} & 0.30 & \textbf{0.62} \\

\midrule

\multirow{4}{*}{\textbf{Nonlinear}} & DPS & In-domain
& 24.88 & \textbf{0.26} & 0.71
& 22.51 & \textbf{0.38} & 0.63
& 20.31 & \textbf{0.39} & 0.54 \\

\cmidrule(l){2-12}

& \multirow{3}{*}{Ours} & CelebA
& \textbf{25.21} & 0.27 & \textbf{0.74}
& 22.34 & 0.47 & 0.63
& 20.52 & 0.51 & 0.52 \\

&  & Bedroom
& 24.59 & 0.39 & 0.69
& \textbf{22.67} & 0.39 & \textbf{0.64}
& 20.79 & 0.47 & 0.54 \\

&  & Church
& 24.73 & 0.39 & 0.69
& 22.59 & 0.42 & 0.63
& \textbf{20.88} & 0.39 & \textbf{0.56} \\

\hline
\end{tabular}
\end{table}

\subsection{Comparison with Optimization-Based Methods}\label{subsec: additional dmplug vs result}

Table~\ref{tab:algo_compare_inpainting_superres} reports further comparisons between our method and DMPlug on Gaussian deblurring and \(4\times\) super-resolution. Across both tasks and all three datasets, our method matches or outperforms DMPlug in most metrics. For Gaussian deblurring, our method improves PSNR on all three datasets, with consistently competitive LPIPS and SSIM. For super-resolution, the two methods are close, while our method generally yields lower LPIPS and higher SSIM.

\begin{table}
\centering
\caption{Comparison with DMPlug under in-domain priors for Gaussian deblurring and super-resolution task.}
\label{tab:algo_compare_inpainting_superres}

\begin{tabular}{ll|ccc|ccc}
\toprule
\multirow{2}{*}{\textbf{Task}} &
\multirow{2}{*}{\textbf{Dataset}}
& \multicolumn{3}{c|}{\textbf{Ours}}
& \multicolumn{3}{c}{\textbf{DMplug}} \\

& & PSNR $\uparrow$ & LPIPS $\downarrow$ & SSIM $\uparrow$
  & PSNR $\uparrow$ & LPIPS $\downarrow$ & SSIM $\uparrow$ \\

\midrule

\multirow{3}{*}{\textbf{Gaussian}} & CelebA
& \textbf{29.98} & \textbf{0.21} & \textbf{0.82}
& 29.52 & 0.23 & 0.81 \\

& Church
& \textbf{22.43} & \textbf{0.30} & \textbf{0.62}
& 22.24 & 0.32 & 0.61 \\

& Bedroom
& \textbf{25.28} & \textbf{0.30} & \textbf{0.72}
& 25.08 & 0.33 & 0.71 \\

\hline

\multirow{3}{*}{\textbf{Super-Res}} & CelebA
& \textbf{31.271} & \textbf{0.184} & \textbf{0.863}
& 31.122 & 0.198 & 0.857 \\

& Church
& 23.088 & \textbf{0.266} & \textbf{0.674}
& \textbf{23.183} & 0.280 & 0.667 \\

& Bedroom
& \textbf{26.587} & \textbf{0.249} & \textbf{0.777}
& 26.309 & 0.280 & 0.767 \\

\hline
\end{tabular}
\end{table}

\subsection{Comparison with additional diffusion-solver baseline: DAPS and DiffPIR}
\label{app:daps}

We include DAPS and DiffPIR as additional diffusion-based inverse-problem
baselines. This comparison is intended to check whether the conclusions in the
main text are specific to using DPS as the full-prior baseline. We consider three inverse tasks: random inpainting, Gaussian
deblurring, and \(4\times\) super-resolution, evaluated on CelebA-HQ, LSUN
Bedroom, and LSUN Church. Table~\ref{tab:daps-diffpir} reports the results.

DAPS is a strong baseline across these tasks, especially on perceptual metrics,
while DiffPIR is generally weaker under this setup. These results provide
additional context for the baseline landscape. They do not change the main
message of the paper.  Our early-stopped initial-noise optimization (INO) under weak priors achieves competitive PSNR
across tasks. For example, on CelebA-HQ inpainting, INO reaches a PSNR of
33.78 (see Table \ref{tab:cross-domain}), outperforming DAPS (32.35) and DiffPIR (31.96). The out-of-domain
setting also remains competitive: using a Church-trained prior to recover
CelebA-HQ images still achieves a PSNR of 32.62.

\begin{table}[t]
\centering
\small
\setlength{\tabcolsep}{4pt}
\begin{tabular}{llccccc}
\toprule
Task & Dataset
& \multicolumn{3}{c}{DAPS}
& \multicolumn{2}{c}{DiffPIR} \\
\cmidrule(lr){3-5}
\cmidrule(lr){6-7}
& & PSNR $\uparrow$ & SSIM $\uparrow$ & LPIPS $\downarrow$
& PSNR $\uparrow$ & LPIPS $\downarrow$ \\
\midrule
Inpainting & CelebA-HQ & 32.35 & 0.869 & 0.112 & 31.96 & 0.192 \\
Inpainting & Bedroom   & 28.93 & 0.845 & 0.142 & 28.12 & 0.247 \\
Inpainting & Church    & 24.95 & 0.781 & 0.145 & 24.81 & 0.218 \\
\midrule
Gaussian deblur & CelebA-HQ & 30.08 & 0.794 & 0.137 & 27.35 & 0.206 \\
Gaussian deblur & Bedroom   & 26.64 & 0.733 & 0.239 & 23.84 & 0.341 \\
Gaussian deblur & Church    & 23.54 & 0.646 & 0.256 & 21.01 & 0.317 \\
\midrule
SR \(4\times\) & CelebA-HQ & 30.21 & 0.802 & 0.142 & 29.59 & 0.177 \\
SR \(4\times\) & Bedroom   & 26.68 & 0.740 & 0.236 & 25.98 & 0.280 \\
SR \(4\times\) & Church    & 23.57 & 0.656 & 0.248 & 22.66 & 0.277 \\
\bottomrule
\end{tabular}
\caption{Comparison against DAPS and DiffPIR.}
\label{tab:daps-diffpir}
\end{table}

\subsection{Random-network weak prior: Deep Random Projector}
\label{app:drp}
We include Deep Random Projector (DRP) \citep{li2023deep} as an additional baseline for random
inpainting. DRP is closely related to our setting because it also solves inverse
problems through an optimization-based generative prior. However, it represents
an even weaker prior than the weak diffusion priors studied in the main text:
instead of using a pretrained diffusion generator, DRP uses a randomly
initialized frozen network. Thus, it does not use any learned data distribution
and relies only on the architectural inductive bias of the generator.

We evaluate DRP on random inpainting for CelebA-HQ, LSUN Bedroom, and LSUN
Church, using the same setting as Appendix~\ref{subsec:experiment_config}: 70\%
random masking with Gaussian noise \(\sigma=0.01\) on the observed pixels. All
results are averaged over 100 test images per dataset. Table~\ref{tab:drp}
reports the results.

DRP obtains reasonable reconstructions, showing that architectural inductive
bias alone can provide useful regularization for inverse problems.

However, it
is consistently worse than our weak diffusion prior. In particular, DRP reconstructions are noticeably blurrier and contain weaker
fine-scale details; see Figure~\ref{fig:drp_recon} for an example. Compared with the
in-domain 3-step diffusion prior in Table~\ref{tab:cross-domain}, DRP is lower
by \(4.69\) dB on CelebA-HQ, \(3.47\) dB on Bedroom, and \(2.14\) dB on Church.
Its SSIM is also lower, especially on CelebA-HQ and Bedroom. These results
support the view that weak diffusion priors are not arbitrary weak generators:
even when their unconditional samples are low-fidelity, they still carry useful
information from pretraining that is absent from a randomly initialized network.
\begin{table}[t]
\centering
\small
\setlength{\tabcolsep}{6pt}
\begin{tabular}{lcccc}
\toprule
Dataset & PSNR (dB) $\uparrow$ & PSNR std & SSIM $\uparrow$ & SSIM std \\
\midrule
CelebA-HQ & 29.09 & 1.64 & 0.8599 & 0.0262 \\
Bedroom   & 25.41 & 2.44 & 0.7831 & 0.0474 \\
Church    & 22.79 & 2.30 & 0.7425 & 0.0571 \\
\bottomrule
\end{tabular}
\caption{Deep Random Projector (DRP) performance on random inpainting with
70\% random masking and Gaussian noise \(\sigma=0.01\) on observed pixels.
Results are averaged over 100 images per dataset using a randomly initialized
frozen SkipNet.}
\label{tab:drp}
\end{table}

\subsection{Scientific inverse problems}
\label{app:science}

We further evaluate prior mismatch on scientific inverse problems from
InverseBench \cite{zheng2025inversebench}. These experiments are useful because, in scientific imaging, a
well-matched pretrained generative prior may be unavailable, so one may need to
reuse a prior trained for a different physical system. 

\paragraph{Linear scattering with an FWI prior.}
We first consider the Linear Scattering problem. Following the InverseBench
setup, we evaluate three measurement settings with \(60\), \(180\), and \(360\)
receivers, corresponding to different levels of measurement informativeness. We
use the Full Waveform Inversion (FWI) prior provided by InverseBench as a
mismatched prior. This prior is strongly mismatched to the Linear Scattering
task, since it is trained for recovering subsurface physical properties from
full waveform measurements. We compare DPS, DAPS, and our initial-noise optimization 
(INO), with all methods lightly tuned.

Table~\ref{tab:science-linear-scattering} reports PSNR. Compared with the matched-prior results in Table~3 of
\citet{zheng2025inversebench}, the matched-prior setting remains stronger
overall. Nevertheless, the degradation under prior mismatch is moderate rather
than catastrophic. With \(180\) and \(360\) receivers, the reconstructions remain
reasonable; with \(60\) receivers, INO even attains higher PSNR than several
strong-prior methods reported in InverseBench. Among the mismatched-prior methods tested here, INO consistently
outperforms both DPS and DAPS across all three receiver settings. The gains over
DPS are \(1.68\), \(1.42\), and \(1.53\) dB for \(60\), \(180\), and \(360\)
receivers, respectively; the gains over DAPS are \(0.64\), \(2.13\), and
\(1.31\) dB.

\begin{table}[t]
\centering
\small
\setlength{\tabcolsep}{8pt}
\begin{tabular}{lccc}
\toprule
Number of receivers & INO & DPS & DAPS \\
\midrule
60  & 21.97 & 20.29 & 21.33 \\
180 & 27.66 & 26.24 & 25.53 \\
360 & 27.99 & 26.46 & 26.68 \\
\bottomrule
\end{tabular}
\caption{PSNR comparison on the Linear Scattering inverse problem under prior
mismatch. We use the FWI prior from InverseBench as a mismatched prior and
evaluate three receiver settings. Higher PSNR is better.}
\label{tab:science-linear-scattering}
\end{table}

\paragraph{Nonlinear black hole imaging.}
We also evaluate the nonlinear Black Hole Imaging problem from InverseBench. We
compare a matched prior with a mismatched prior, where the mismatched prior is
taken from the Linear Scattering model and resized to the Black Hole Imaging
resolution. Table~\ref{tab:science-black-hole} reports PSNR and blur PSNR.

Under the matched prior, DPS gives the best PSNR, followed by INO and DAPS.
Under prior mismatch, however, INO becomes the strongest method among the three:
it improves over DPS and DAPS by \(2.04\) dB and \(2.82\) dB in PSNR,
respectively, and by \(2.41\) dB and \(3.48\) dB in blur PSNR. INO also has the
smallest degradation from the matched-prior setting to the mismatched-prior
setting.

At the same time, these results should be interpreted with some care. INO is
computationally slower than DPS and DAPS because it solves an iterative
optimization problem over the initial noise. In addition, although INO performs
well in PSNR and blur PSNR, other physics-aware metrics such as \(\chi^2\) are
less favorable in our experiments. This suggests that, for scientific inverse
problems, optimizing the initial noise with a simple pixel- or image-space
reconstruction loss may not be enough. Better task-specific loss functions,
including losses tied to the forward model or physics-based statistics such as
\(\chi^2\), may be needed to align INO with the evaluation criteria used in
scientific imaging.

\begin{table}[t]
\centering
\small
\setlength{\tabcolsep}{5pt}
\begin{tabular}{lcccc}
\toprule
Method
& PSNR (Mismatch) & PSNR (Matched)
& Blur PSNR (Mismatch) & Blur PSNR (Matched) \\
\midrule
DPS  & \(18.06 \pm 3.00\) & \(26.15 \pm 4.82\)
     & \(21.98 \pm 4.18\) & \(33.15 \pm 6.87\) \\
DAPS & \(17.28 \pm 2.14\) & \(22.56 \pm 3.81\)
     & \(20.91 \pm 2.80\) & \(26.89 \pm 4.55\) \\
INO  & \(20.10 \pm 2.93\) & \(24.15 \pm 3.15\)
     & \(24.39 \pm 3.87\) & \(29.32 \pm 4.49\) \\
\bottomrule
\end{tabular}
\caption{Results on nonlinear Black Hole Imaging under matched and mismatched
priors. The mismatched prior is taken from the Linear Scattering model and
resized to the Black Hole Imaging resolution. Higher PSNR and blur PSNR are
better.}
\label{tab:science-black-hole}
\end{table}

These scientific inverse-problem experiments support the main message
of the paper: weak priors can still give useful reconstructions
when the measurements are informative. At the same time, the gap between matched
and mismatched priors appears larger here than in our natural-image experiments.
One possible reason is that different scientific domains may share less local
spatial structure than natural-image domains, so a mismatched prior has less
transferable structural information to propagate from the measurements. These
results also point to an important direction for future work: for scientific
inverse problems, the optimization objective for initial-noise optimization
should be designed to match the scientific task and evaluation metric, rather
than relying only on generic image-space reconstruction losses.

\subsection{Failure Modes}\label{subsec: additional failure result}

Table \ref{tab:box_results} provides the full quantitative results for box inpainting, where the goal is to reconstruct Church images while masking increasingly large square regions. For moderate mask sizes ($0.3\times0.3$ and $0.4\times0.4$), our in-domain weak prior remains competitive with DPS. However, as the masked region grows to $0.5\times0.5$ and $0.6\times0.6$, DPS begins to dominate in both PSNR and LPIPS, while our in-domain variant exhibits noticeable degradation, as illustrated in the additional visualizations in Figure~\ref{fig:failure_box_0.6}. This confirms the conclusion that weak diffusion priors remain effective for reconstruction-focused inverse problems but encounter fundamental limitations in generation-heavy settings.

We observe a similar trend for large-scale super-resolution. Qualitative results for $16\times$ super-resolution are shown in Figure~\ref{fig:failure_super_res_16}, where out-of-domain reconstructions generated by a CelebA-pretrained model begin to exhibit face-like structures when applied to Church images. 

\begin{table}
\small
\centering
\caption{Failure mode results for box inpainting.}
\label{tab:box_results}
\begin{tabular}{llccc}
\toprule
\textbf{Mask} & \textbf{Method} & PSNR $\uparrow$ & LPIPS $\downarrow$ & SSIM $\uparrow$\\
\midrule
\multirow{3}{*}{0.3 $\times$ 0.3}
& DPS        & 25.00 & \textbf{0.22} & 0.79 \\
& Ours (In)  & \textbf{25.28} & 0.23 & \textbf{0.81} \\
& Ours (Out) & 22.42 & 0.32 & 0.77 \\
\cmidrule(l){1-5}
\multirow{3}{*}{0.4 $\times$ 0.4}
& DPS        & 22.28 & \textbf{0.24} & 0.76 \\
& Ours (In)  & \textbf{22.31} & 0.25 & \textbf{0.78} \\
& Ours (Out) & 19.20 & 0.35 & 0.73 \\
\cmidrule(l){1-5}
\multirow{3}{*}{0.5 $\times$ 0.5}
& DPS        & \textbf{19.38} & \textbf{0.27} & 0.71 \\
& Ours (In)  & 19.09 & 0.30 & \textbf{0.72} \\
& Ours (Out) & 16.81 & 0.39 & 0.67 \\
\cmidrule(l){1-5}
\multirow{3}{*}{0.6 $\times$ 0.6}
& DPS        & \textbf{17.72} & \textbf{0.30} & 0.65 \\
& Ours (In)  & 17.27 & 0.34 & \textbf{0.66} \\
& Ours (Out) & 15.35 & 0.43 & 0.61 \\
\hline
\end{tabular}
\end{table}

\subsection{Latent Diffusion Applications}\label{subsec: additional latent result}

 We evaluate two few-step priors based on Stable Diffusion 2.1 and DiT, and compare against two baselines: DPS \cite{chung2023diffusion} and DSG \cite{yang2024guidance}, all implemented with SD 2.1 backbones. 
 
Table \ref{tab:latent_results} summarizes ImageNet results for inpainting and Gaussian deblurring using latent diffusion models.  Overall, our method performs comparably to DPS on both tasks and substantially outperforms DSG. Even with much weaker generative models, our framework preserves strong reconstruction quality for both inpainting and deblurring. Visual comparisons are provided in Figure~\ref{fig:latent_gaussian} (Gaussian deblurring) and Figure~\ref{fig:latent_inpainting} (inpainting).

\begin{table}
\centering
\small
\caption{Results for inpainting and Gaussian deblurring.
We report ES metrics when available; otherwise, final results are used.
Best results are in \textbf{bold}, second-best are \underline{underlined}.}
\label{tab:latent_results}
\setlength{\tabcolsep}{4pt}
\begin{tabular}{lll|ccc}
\toprule
\textbf{Task} & \textbf{Method} & \textbf{Model}
& PSNR $\uparrow$ & LPIPS $\downarrow$ & SSIM $\uparrow$ \\
\midrule

\multirow{5}{*}{\textbf{Inpainting}}
& \multirow{2}{*}{Ours} & DiT
& \textbf{28.44} & 0.394 & 0.727 \\
&  & SD2.1
& 26.82 & \underline{0.374} & \underline{0.745} \\
\cmidrule(l){2-6}
& DPS & \multirow{3}{*}{SD2.1}
& \underline{28.21} & \textbf{0.292} & \textbf{0.797} \\
& DSG &  
& 23.33 & 0.503 & 0.597 \\
\midrule

\multirow{5}{*}{\textbf{Gaussian}}
& \multirow{2}{*}{Ours} & DiT
& \textbf{26.48} & 0.458 & 0.678 \\
&   & SD2.1
& 25.40 & \underline{0.443} & \underline{0.685} \\
\cmidrule(l){2-6}
& DPS & \multirow{3}{*}{SD2.1}
& \underline{25.49} & \textbf{0.388} & \textbf{0.728} \\
& DSG &  
& 22.09 & 0.510 & 0.539 \\
\hline
\end{tabular}
\end{table}

\subsection{Spatial autocorrelation diagnostic}\label{app:spatial-correlation}

We provide the details of the spatial autocorrelation diagnostic used in
Section~\ref{subsec:shared local}. The goal is to measure whether weak and
stronger diffusion priors preserve similar local pixel-level structure, even
when their unconditional samples differ in fidelity or semantic content.

Let \(x\in\mathbb{R}^{H\times W\times C}\) be a generated image, where
\(C=3\) for RGB images. For each channel \(c\), let \(x_{c,p}\) denote the
pixel value at spatial location \(p\in\Omega=\{1,\ldots,H\}\times
\{1,\ldots,W\}\), and let
\[
\bar x_c = \frac{1}{|\Omega|}\sum_{p\in\Omega} x_{c,p}.
\]
For an integer lag \(r\geq 0\), define
\[
\mathcal P_r
=
\big\{(p,q)\in\Omega\times\Omega:
\mathrm{round}(\|p-q\|_2)=r\big\}.
\]
The normalized spatial autocorrelation of image \(x\) at lag \(r\) is
\[
C_x(r)
=
\frac{1}{C}\sum_{c=1}^C
\frac{
|\mathcal P_r|^{-1}\sum_{(p,q)\in\mathcal P_r}
(x_{c,p}-\bar x_c)(x_{c,q}-\bar x_c)
}{
|\Omega|^{-1}\sum_{p\in\Omega}(x_{c,p}-\bar x_c)^2
}.
\]
By construction, \(C_x(0)=1\). For each prior and sampler setting, we generate
100 unconditional samples and report the averaged profile
\[
C(r)=\frac{1}{100}\sum_{i=1}^{100} C_{x_i}(r).
\]

We compare DDPM priors trained on CelebA-HQ and LSUN Bedroom, sampled with
either 3 or 20 DDIM steps at resolution \(256\times256\). For each setting, we
average over samples, RGB channels, and all pixel-pair directions with the same
rounded Euclidean distance. Table~\ref{tab:shared-local-ac} reports
representative lags. Pairwise correlation matrix is reported in Table~\ref{tab:spatial-ac-corr}.

\begin{table}[t]
\centering
\small
\setlength{\tabcolsep}{5pt}
\begin{tabular}{lcccc}
\toprule
 & CelebA 3-step & CelebA 20-step & Bedroom 3-step & Bedroom 20-step \\
\midrule
CelebA 3-step   & 1.0000 & 0.9998 & 0.9989 & 0.9965 \\
CelebA 20-step  & 0.9998 & 1.0000 & 0.9987 & 0.9967 \\
Bedroom 3-step  & 0.9989 & 0.9987 & 1.0000 & 0.9993 \\
Bedroom 20-step & 0.9965 & 0.9967 & 0.9993 & 1.0000 \\
\bottomrule
\end{tabular}
\caption{Pairwise Pearson correlations between spatial autocorrelation profiles
over representative lags \(\{1,2,4,8,16,32\}\).}
\label{tab:spatial-ac-corr}
\end{table}

The profiles are highly similar across both axes of variation: sampler strength
and training domain. For example, the Pearson correlation between the 3-step
Bedroom profile and the 20-step CelebA-HQ profile is
\(0.9987\). This supports the interpretation that weak natural-image priors may
lose sample fidelity or domain match while still preserving short-range local
structural information.

\subsection{DPS with mismatched priors}
\label{app:same-solver-dps}
To separate the effect of domain mismatch from the choice of inverse solver, we
also evaluate DPS with matched and mismatched full diffusion priors. In the main
experiments, DPS uses a full diffusion prior trained on the target domain. Here,
we keep the solver fixed as DPS, but vary the source domain of the pretrained
diffusion model. This isolates the domain-match axis from the sampler-strength
axis.

Table~\ref{tab:dps-mismatched-nonlinear} reports results for the nonlinear
deblurring task. The matched-prior rows are included as references. As expected,
the matched prior usually gives the best performance for each target dataset.
However, the degradation under prior mismatch is not catastrophic in all cases.
For Church and Bedroom targets, mismatched priors remain close to the matched
DPS baseline in PSNR, while CelebA-HQ is more sensitive to source-domain
mismatch. These results support the view that mismatched priors can still provide useful
reconstruction performance.

\begin{table}[t]
\centering
\small
\setlength{\tabcolsep}{5pt}
\begin{tabular}{llccc}
\toprule
Source model & Target dataset & PSNR $\uparrow$ & LPIPS $\downarrow$ & SSIM $\uparrow$ \\
\midrule
CelebA-HQ & CelebA-HQ & 24.88 & 0.264 & 0.709 \\
Church    & CelebA-HQ & 22.38 & 0.433 & 0.631 \\
Bedroom   & CelebA-HQ & 23.57 & 0.411 & 0.660 \\
\midrule
Church    & Church    & 20.31 & 0.385 & 0.539 \\
CelebA-HQ & Church    & 20.06 & 0.508 & 0.490 \\
Bedroom   & Church    & 19.84 & 0.469 & 0.512 \\
\midrule
Bedroom   & Bedroom   & 22.51 & 0.375 & 0.633 \\
CelebA-HQ & Bedroom   & 21.98 & 0.477 & 0.588 \\
Church    & Bedroom   & 22.04 & 0.422 & 0.614 \\
\bottomrule
\end{tabular}
\caption{Performance of DPS with matched and mismatched full diffusion priors
on nonlinear deblurring. The source model indicates the domain of the pretrained
diffusion prior, and the target dataset indicates the reconstruction domain.
Matched-prior rows are included as references.}
\label{tab:dps-mismatched-nonlinear}
\end{table}
\subsection{Sensitivity to measurement noise}\label{app:noise-sensitivity}
We further study how measurement noise affects prior sensitivity. We vary the
Gaussian noise level \(\sigma \in \{0.01,0.05,0.2,0.5,1.0\}\) and compare DPS,
which uses a strong in-domain prior, with our initial-noise optimization (INO) method using
weak priors in both in-domain and out-of-domain settings. Table~\ref{tab:noise-sensitivity}
reports PSNR, LPIPS, SSIM, and the selected early-stopping epoch for INO.

The results show that INO is highly competitive at low and moderate noise
levels. For \(\sigma\leq 0.5\), in-domain INO achieves the best PSNR and SSIM,
and the out-of-domain INO variant also remains useful. However, as the noise
level increases, the gap between in-domain and out-of-domain INO becomes larger.
At \(\sigma=1.0\), DPS with a strong in-domain prior outperforms both INO
variants. This is consistent with our failure-mode picture: when measurements
are noisy, they provide fewer reliable local anchors, and reconstruction becomes
more dependent on the strength and domain match of the prior.

\begin{table}[htbp!]
\centering

\begin{tabular}{cllcccc}
\toprule
Noise \(\sigma\) & Method & Domain & PSNR \(\uparrow\) & LPIPS \(\downarrow\) & SSIM \(\uparrow\) & Epoch \\
\midrule
0.01 & DPS & In-domain & 31.98 & \textbf{0.1447} & 0.8827 & -- \\
0.01 & INO & In-domain & \textbf{33.78} & 0.1476 & \textbf{0.9147} & 974.09 \\
0.01 & INO & Out-of-domain & 32.62 & 0.1626 & 0.8985 & 973.42 \\
\midrule
0.05 & DPS & In-domain & 30.12 & 0.1788 & 0.8403 & -- \\
0.05 & INO & In-domain & \textbf{33.39} & \textbf{0.1457} & \textbf{0.9096} & 940.61 \\
0.05 & INO & Out-of-domain & 32.08 & 0.1676 & 0.8889 & 921.33 \\
\midrule
0.20 & DPS & In-domain & 26.93 & 0.2281 & 0.7602 & -- \\
0.20 & INO & In-domain & \textbf{30.41} & \textbf{0.1981} & \textbf{0.8478} & 358.95 \\
0.20 & INO & Out-of-domain & 28.38 & 0.2862 & 0.7877 & 351.41 \\
\midrule
0.50 & DPS & In-domain & 24.45 & 0.2692 & 0.7004 & -- \\
0.50 & INO & In-domain & \textbf{26.30} & \textbf{0.2596} & \textbf{0.7335} & 168.64 \\
0.50 & INO & Out-of-domain & 23.86 & 0.4366 & 0.6183 & 155.89 \\
\midrule
1.00 & DPS & In-domain & \textbf{22.42} & \textbf{0.3095} & \textbf{0.6518} & -- \\
1.00 & INO & In-domain & 20.52 & 0.3771 & 0.4915 & 119.05 \\
1.00 & INO & Out-of-domain & 16.88 & 0.6171 & 0.2962 & 108.28 \\
\bottomrule
\end{tabular}
\caption{Quantitative comparison under varying measurement noise levels
\(\sigma \in \{0.01,0.05,0.2,0.5,1.0\}\). We compare DPS with a strong
in-domain prior against INO with weak priors in both in-domain and out-of-domain
settings. Metrics follow standard conventions: PSNR/SSIM higher is better, and
LPIPS lower is better. Epoch reports the selected early-stopping iteration for
INO.}
\label{tab:noise-sensitivity}
\end{table}

\subsection{On computational comparability with baseline algorithms}\label{subsec:compute compare}

INO and DMPlug both use the same initial-noise optimization framework, so we use the exact same optimization budget in all comparisons to ensure fairness.

DPS and INO allocate computation in fundamentally different ways, so a strict compute-matched comparison is not straightforward. More precisely
\begin{itemize}
    \item DPS has an essentially fixed per-run cost once the number of reverse diffusion steps is set: each reverse step involves roughly one denoiser evaluation, so the total cost is driven by the chain length. Moreover, existing studies \cite{ma2025scaling} suggest that increasing the chain length scales poorly, so the DPS budget is typically treated as effectively fixed. People typically choose  500 or 1000 reverse steps depending on the task.
    \item INO has a tunable compute budget through the number of initial-noise optimization iterations. Each iteration requires running a  $K$-step DDIM generation, so the dominant cost scales with the product of the iteration count and the DDIM step count. In our experiments, we typically use a $3$-step DDIM and $500$ iterations of optimization. 
\end{itemize}

In our experiments in Section~\ref{sec: experiment}, a single run of INO costs about 1.5$\times$ to 3$\times$ as much compute as a single DPS run. This comparison is still suitable for our main purpose, which is to study the effect of the prior. Specifically, our goal is to test whether weak priors can solve inverse problems in data-informative regimes, rather than to claim that INO is uniformly more efficient or uniformly better than DPS. We therefore focus on qualitative behavior and reconstruction metrics such as PSNR and SSIM, instead of treating the main comparison as a strictly compute-matched efficiency benchmark. Moreover, once the diffusion chain length is fixed, DPS does not have a directly comparable knob for using additional compute. For completeness, we also include an explicitly compute-matched comparison below.

Table~\ref{tab:compute-match} reports matched-cost comparisons on CelebA-HQ inpainting using both number of function evaluations (NFE) and wall-clock time. INO uses more memory than DPS, but it can use additional compute effectively. At about 1000 NFE, INO with 333 optimization iterations reaches 31.86 PSNR, close to DPS with 1000 NFE at 32.27 PSNR. At about 3000 NFE, INO reaches 34.13 PSNR, compared with 32.36 PSNR for DPS using the best of three runs. Even compared with DPS using the best of five runs, which uses 5000 NFE and more wall-clock time than INO, INO still achieves higher reconstruction quality. These results show that INO is more expensive than a single DPS run, but provides a controllable cost--quality tradeoff.

\begin{table}[htbp]
\centering
\small
\begin{tabular}{lrrrrrrr}
\toprule
Method & NFE & Runtime (s) & Peak alloc. (MB) & Peak reserved (MB)
& PSNR \(\uparrow\) & SSIM \(\uparrow\) & LPIPS \(\downarrow\) \\
\midrule
INO (333 iters)  & 999  & 56.34  & 7362.67 & 7826.00 & 31.86 & 0.8721 & 0.2039 \\
DPS (single run) & 1000 & 40.44  & 2629.25 & 2918.00 & 32.27 & 0.8851 & 0.1449 \\
INO (1000 iters) & 3000 & 171.16 & 7852.42 & 8474.00 & 34.13 & 0.9174 & 0.1485 \\
DPS (best of 3)  & 3000 & 120.40 & 2629.25 & 2918.00 & 32.36 & 0.8855 & 0.1442 \\
DPS (best of 5)  & 5000 & 200.04 & 2629.25 & 2918.00 & 32.40 & 0.8864 & 0.1436 \\
\bottomrule
\end{tabular}

\caption{Matched-compute comparison on CelebA-HQ inpainting. Compute is
measured by the number of function evaluations (NFE), and we also report
wall-clock runtime and GPU memory usage. INO has higher memory cost than DPS,
but it can use additional optimization iterations to improve reconstruction
quality.}
\label{tab:compute-match}
\end{table}

\subsection{Experiment Configuration}\label{subsec:experiment_config}

\paragraph{Forward Operator}

The forward operators tested are inpainting, Gaussian deblurring, super-resolution, and nonlinear deblurring, and the detailed setup is summarized in Table~\ref{tab:measurement_config}. All measurements have additive Gaussian noise with $\sigma=0.01$. The nonlinear blur model follows \citet{m_Tran-etal-CVPR21} available on public GitHub repository \url{https://github.com/VinAIResearch/blur-kernel-space-exploring}.

\begin{table}
\centering
\caption{Measurement operators and degradation models used in the experiments.}
\label{tab:measurement_config}
\begin{tabular}{lll}
\toprule
\textbf{Task} & \textbf{Operator} & \textbf{Parameters} \\
\hline
Inpainting &
Random masking &
70\% of pixels removed \\

Gaussian Deblurring &
Gaussian convolution &
Kernel size $=61$, intensity $=3$ \\

Super Resolution &
Downsampling &
Factor $4\times$ \\

Nonlinear Deblurring &
BKSE (GOPRO\_wVAE.pth) &
Pretrained GoPro wave model \\

\hline
\end{tabular}
\end{table}

\paragraph{Non-latent Diffusion}

For non-latent diffusion settings, DPS was executed with 1000 sampling steps, while both DMPlug and our proposed method employed 1000 optimization steps. All experiments used Hugging Face pretrained diffusion models on the Church, Bedroom, and CelebA-HQ datasets available with ID \texttt{google/ddpm-church-256}, \texttt{google/ddpm-bedroom-256}, and \texttt{google/ddpm-celebahq-256}. We adopted task-specific learning rates and \textsc{HoldoutTopK} ratios as reported in Table~\ref{tab:algorithm_config}. DPS and DMPlug are implemented under their default configs. 

\begin{table}
\centering
\caption{Algorithm hyperparameters for each measurement task.}
\label{tab:algorithm_config}
\begin{tabular}{lll}
\toprule
\textbf{Task} & \textbf{Learning Rate} & \textbf{$k$ for \textsc{HoldoutTopK}} \\
\hline
Inpainting & 0.02 & 90\% / 10\% \\

Gaussian Deblurring & 0.02 & 80\% / 20\% \\

Super Resolution & 0.01 & 95\% / 5\% \\

Nonlinear Deblurring & 0.01 & 90\% / 10\% \\

\hline
\end{tabular}
\end{table}

\paragraph{Latent Diffusion} For latent diffusion experiments, all baseline methods (DPS and DSG) used 500 steps. Our approach used 500 optimization steps. The learning rate for our approach was set to 0.1 for DiT and 0.2 for Stable Diffusion 2.1. All models were used with a null prompt as an empty text prompt or class 1000.

Baselines' implementation is available on the public GitHub repository
\url{https://github.com/tongdaxu/diffusers-Diffusion-Posterior-Sampling}
, which provides diffuser-based implementations of three algorithms with default configurations. 

Stable Diffusion 2.1 is available on Hugging Face with ID \textit{Manojb/stable-diffusion-2-1-base}. The DiT model is in the official release at
\url{https://github.com/facebookresearch/DiT}, and we use the $(512 \times 512)$ resolution model.

All experiments were conducted on eight NVIDIA L40S GPUs. The most computationally intensive configuration, Stable Diffusion with optimization-based reconstruction, required approximately seven minutes to reconstruct a single image using 500 steps.

\newpage
\section{Visualization}\label{sec:additional visulization}
\subsection{Reconstruction Process with Different Priors}
Figures~\ref{fig:bedroom_recon_1}--\ref{fig:face_recon_2} show results of our initial-noise optimization algorithm for recovering images using both in-distribution and out-of-distribution priors.

\begin{figure}[H]
    \centering
    \includegraphics[width=0.8\textwidth]{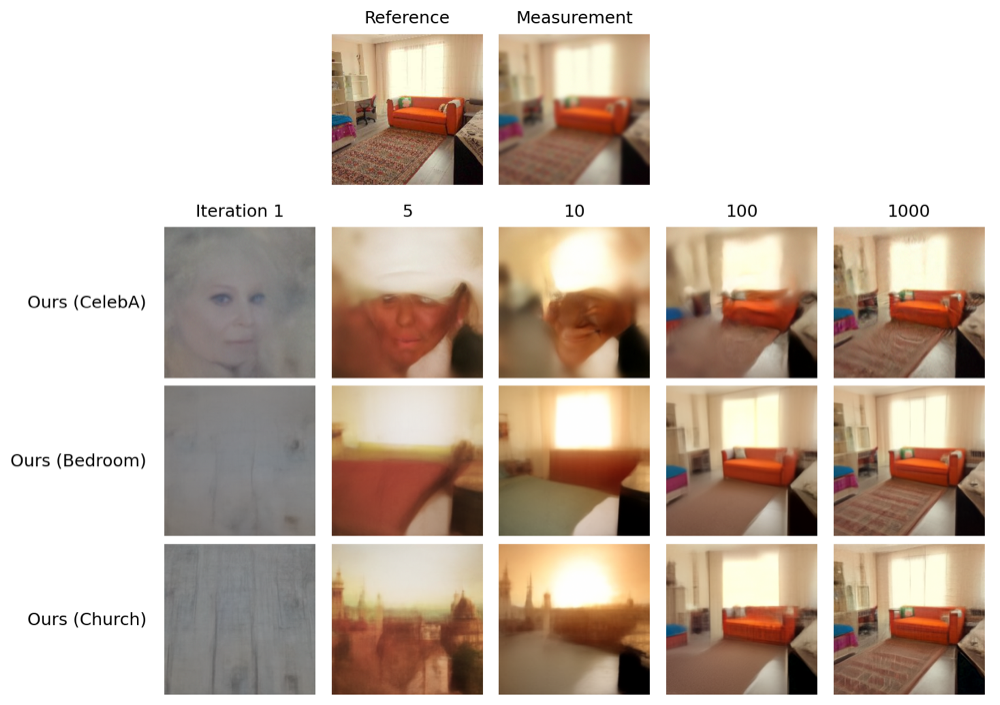}

    \includegraphics[width=0.8\textwidth]{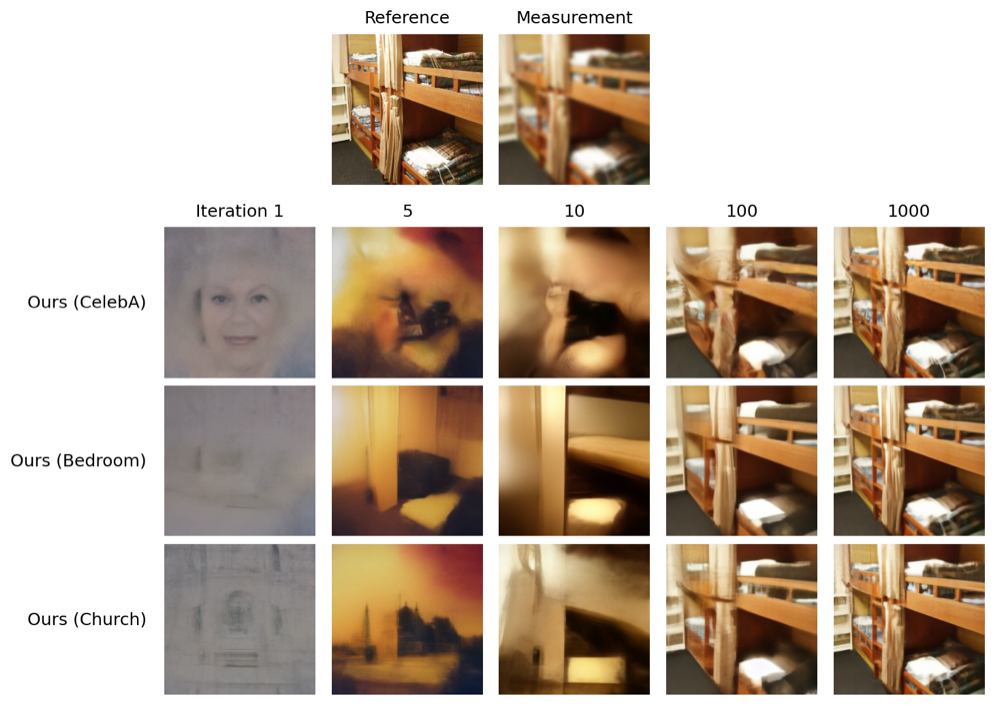}

    \caption{Bedroom reconstruction results for Gaussian deblurring.}
    \label{fig:bedroom_recon_1}
\end{figure}

\begin{figure}[H]
    \centering
    \includegraphics[width=0.8\textwidth]{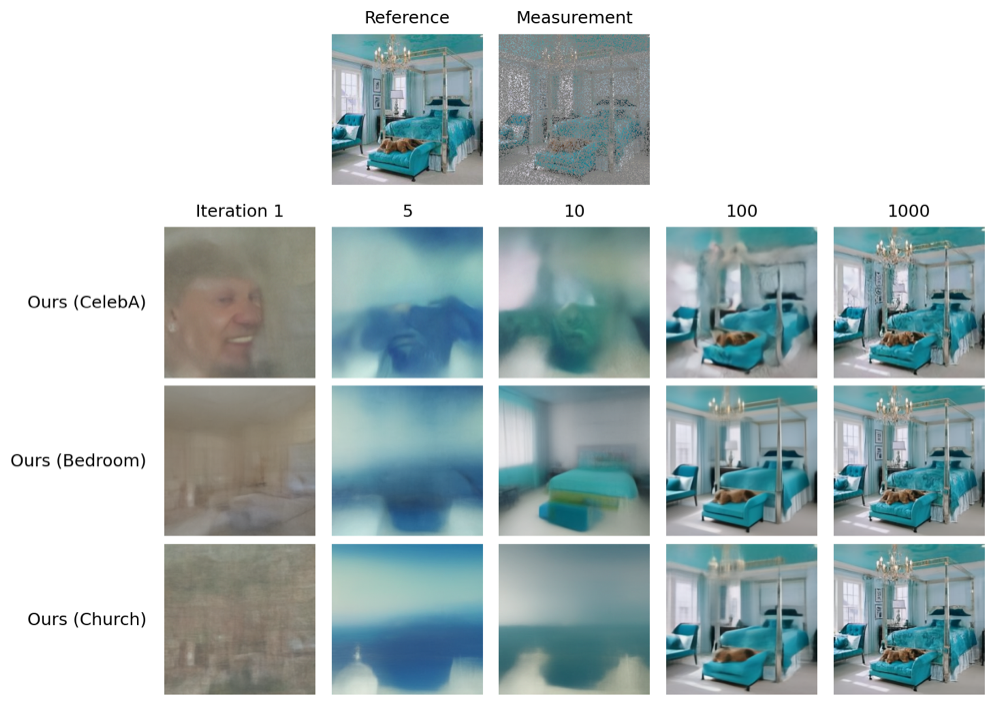}

    \includegraphics[width=0.8\textwidth]{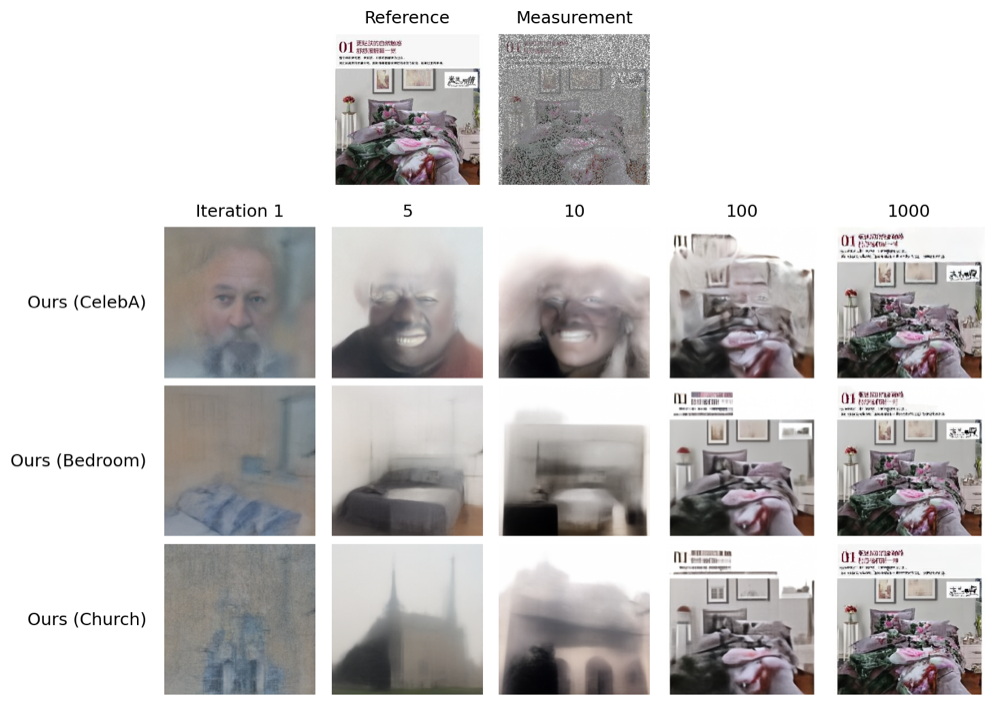}

    \caption{Bedroom reconstruction results for inpainting.}
    \label{fig:bedroom_recon_2}
\end{figure}

\begin{figure}[H]
    \centering
    \includegraphics[width=0.8\textwidth]{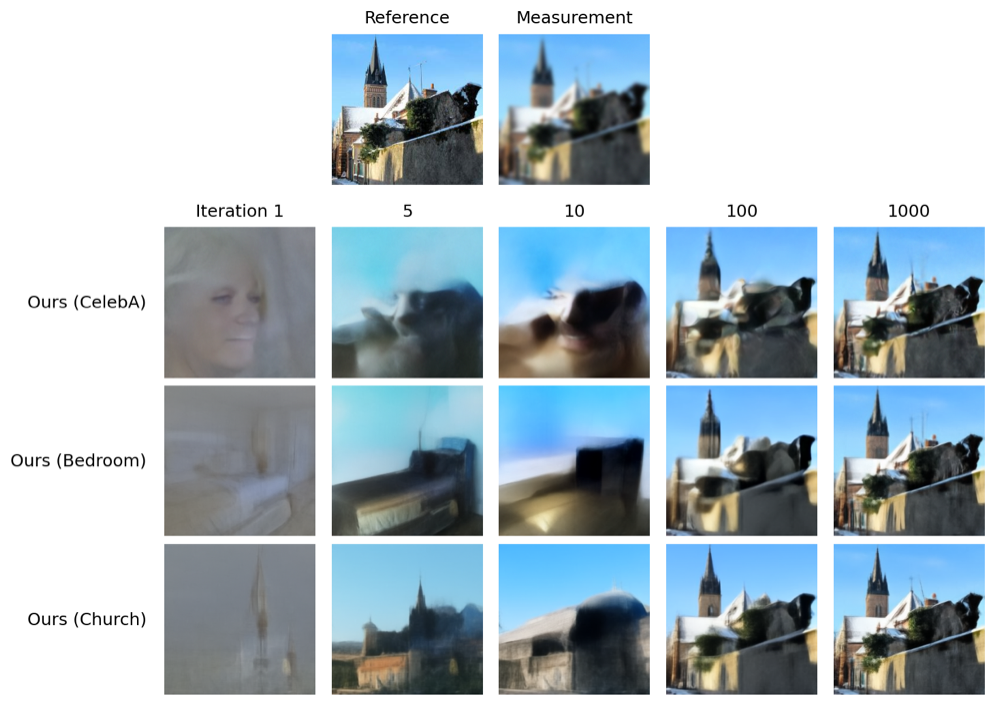}

    \includegraphics[width=0.8\textwidth]{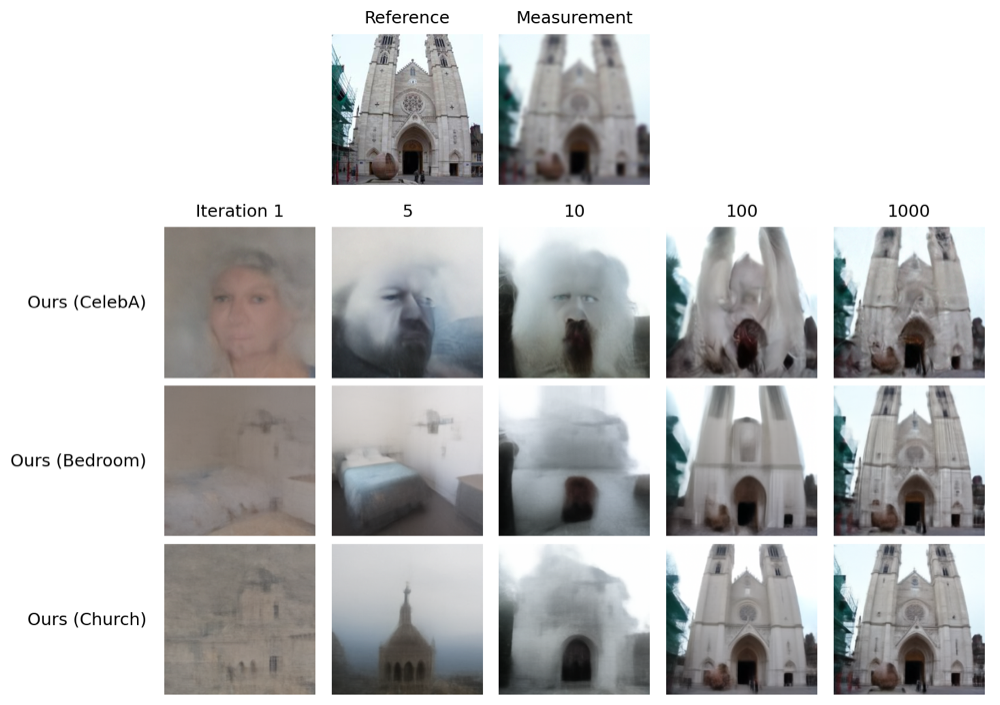}

    \caption{Church reconstruction results for Gaussian deblurring.}
    \label{fig:church_recon_1}
\end{figure}

\begin{figure}[H]
    \centering
    \includegraphics[width=0.8\textwidth]{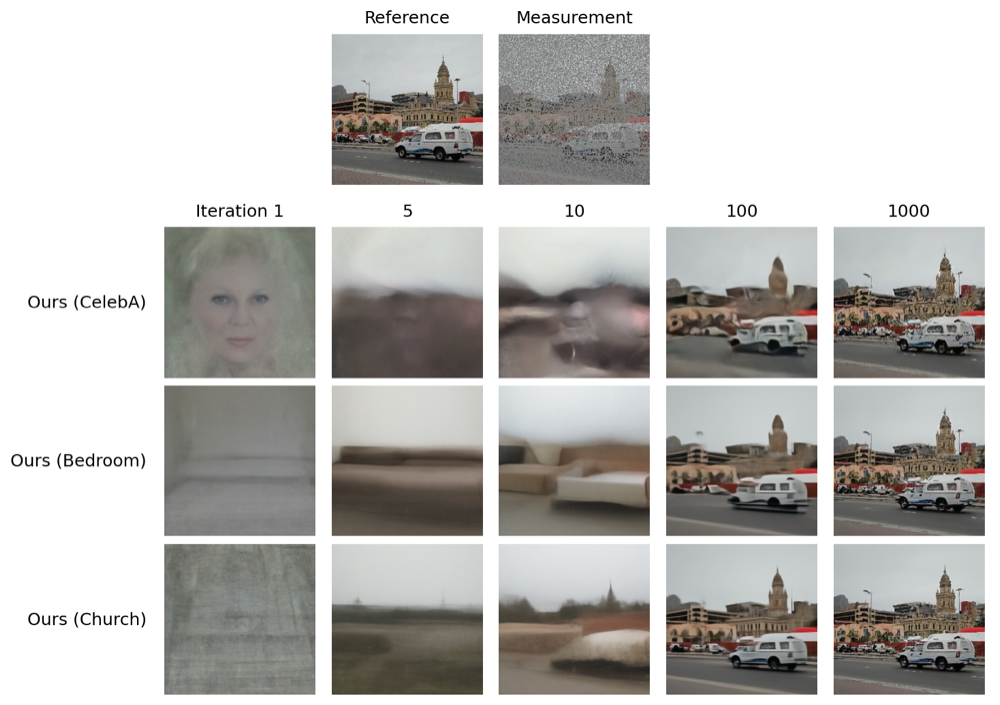}

    \includegraphics[width=0.8\textwidth]{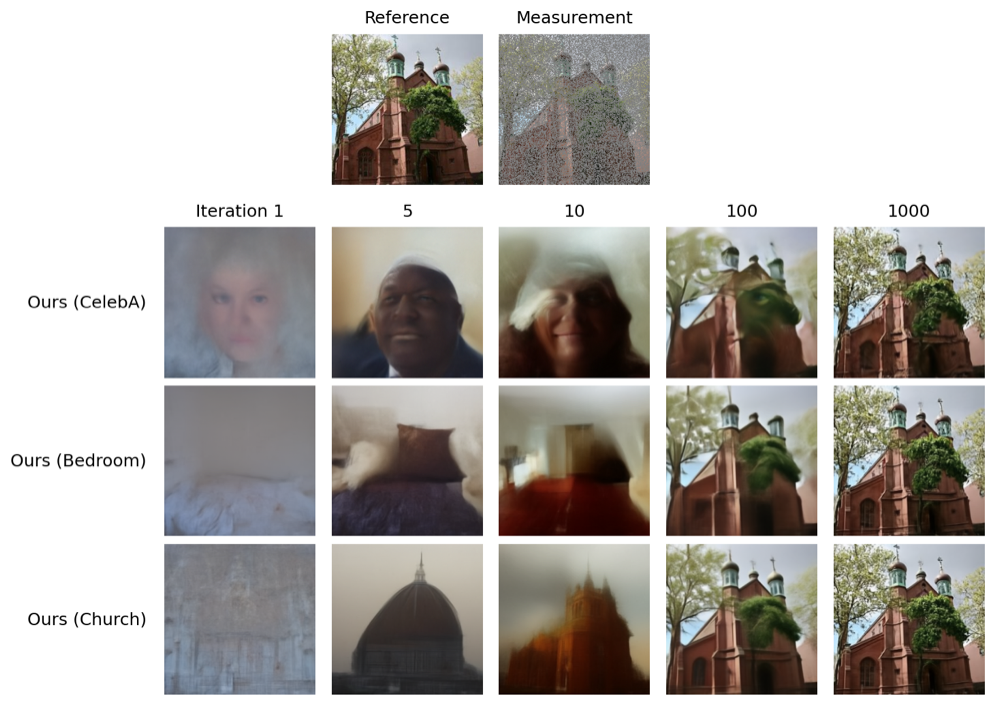}

    \caption{Church reconstruction results for inpainting.}
    \label{fig:church_recon_2}
\end{figure}

\begin{figure}[H]
    \centering
    \includegraphics[width=0.8\textwidth]{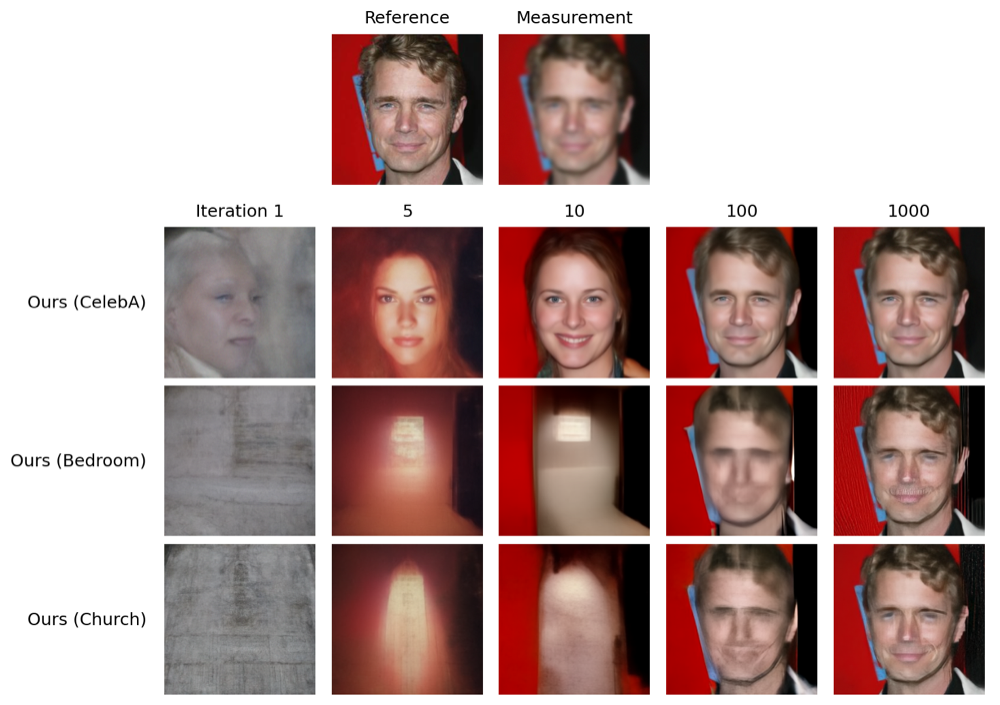}

    \includegraphics[width=0.8\textwidth]{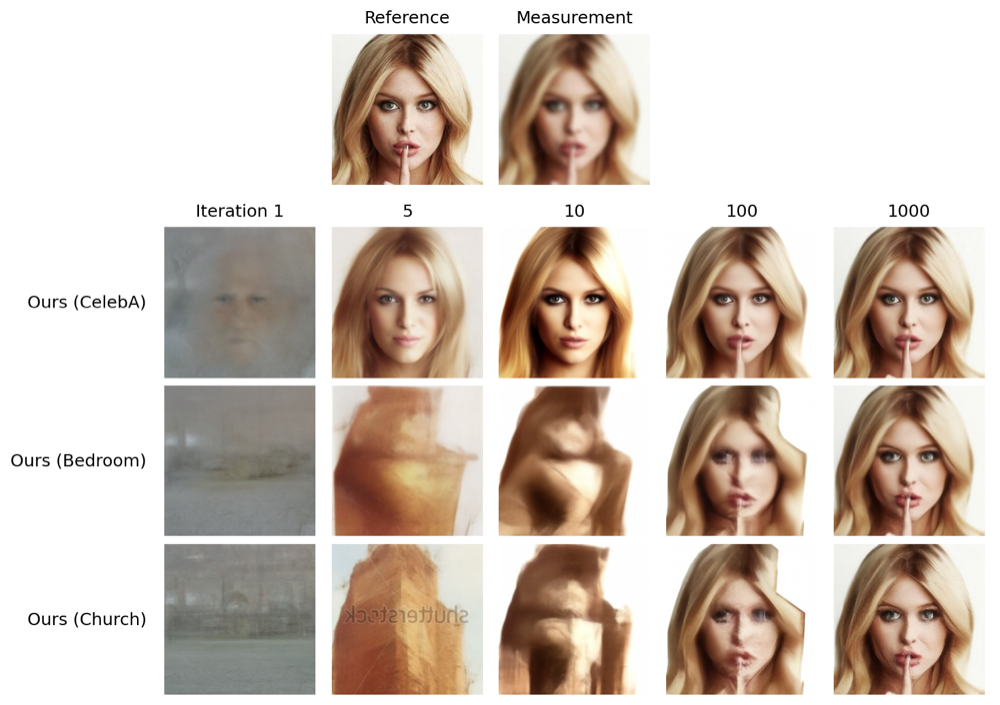}

    \caption{CelebA reconstruction results for Gaussian deblurring.}
    \label{fig:face_recon_1}
\end{figure}

\begin{figure}[H]
    \centering
    \includegraphics[width=0.8\textwidth]{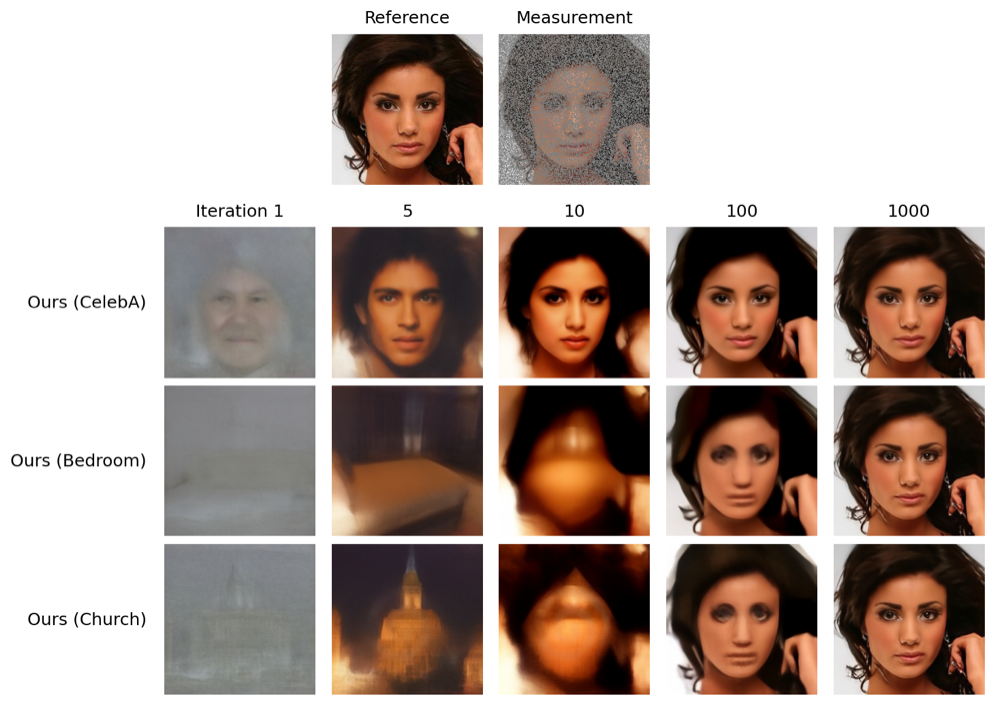}

    \includegraphics[width=0.8\textwidth]{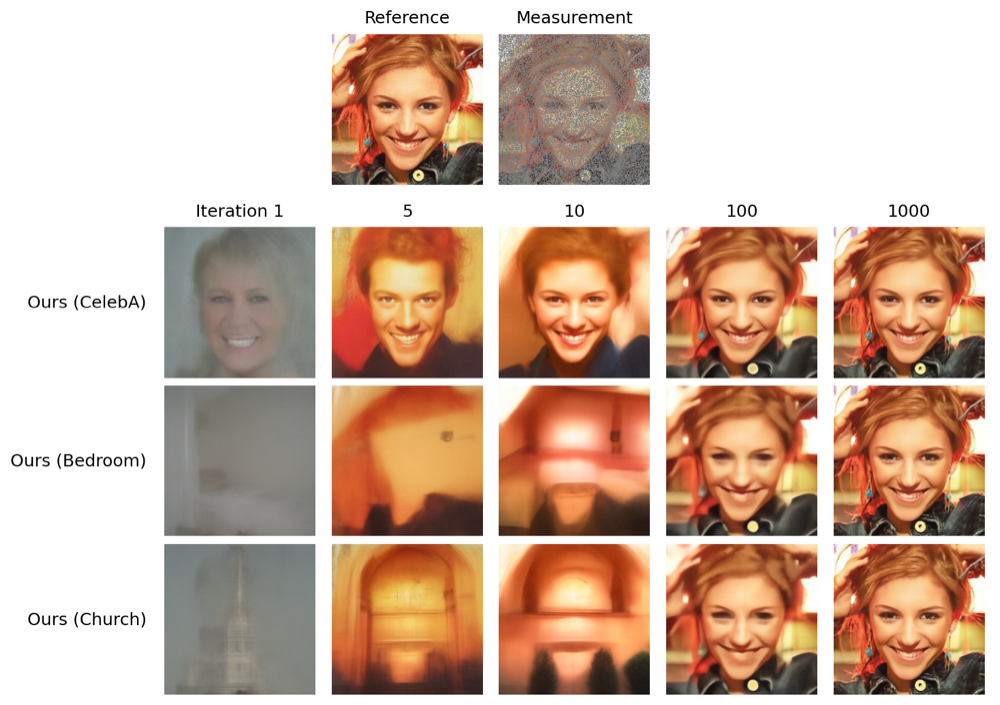}

    \caption{CelebA reconstruction results for inpainting.}
    \label{fig:face_recon_2}
\end{figure}

\subsection{Reconstruction Result for Latent Diffusion Application}
Figures~\ref{fig:latent_inpainting} and \ref{fig:latent_gaussian} show reconstruction results for several latent diffusion algorithms. We implement our methods using both SD2.1 and DiT priors.

\begin{figure}[!t]
    \centering
    \includegraphics[width=0.8\linewidth]{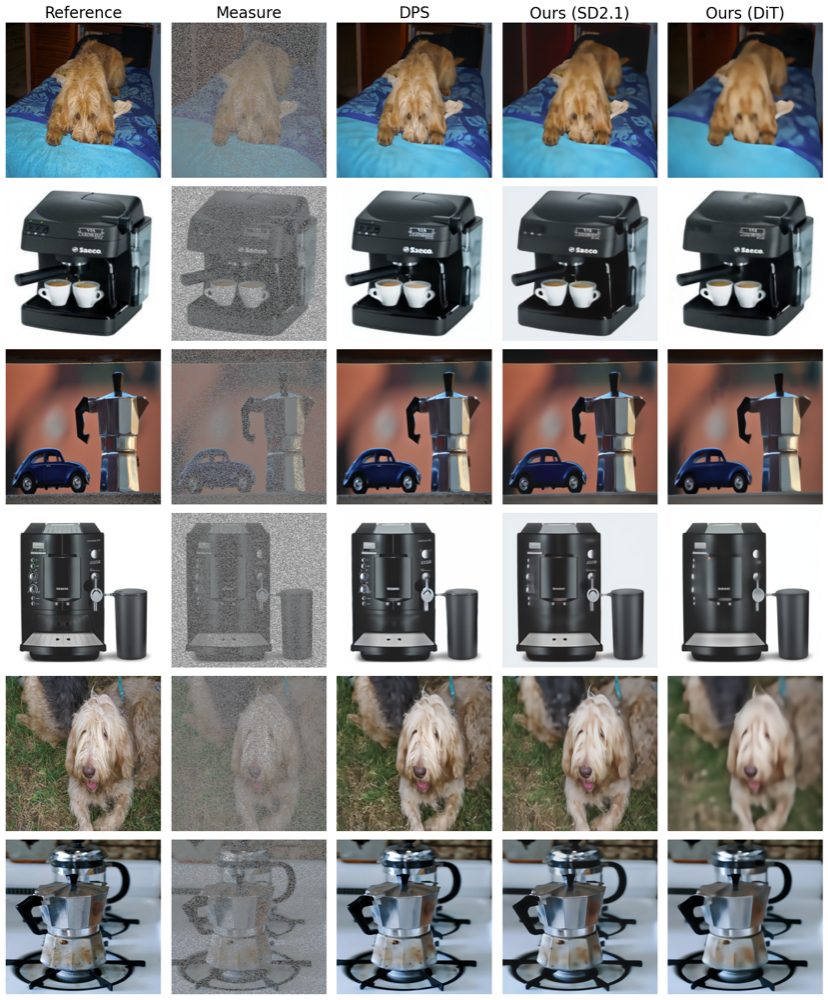}
    \caption{Reconstruction results for inpainting task on ImageNet using latent diffusion priors (Stable Diffusion 2.1 and DiT), compared with DPS}
    \label{fig:latent_inpainting}
\end{figure}

\begin{figure}[!t]
    \centering
    \includegraphics[width=0.8\linewidth]{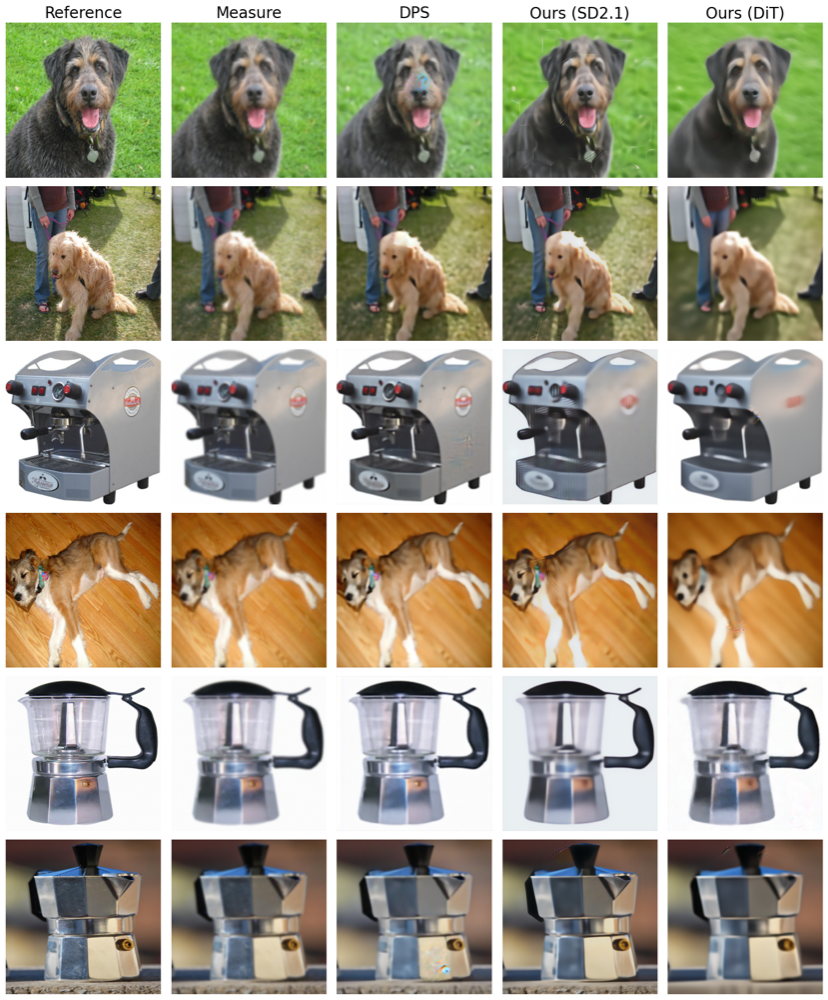}
    \caption{Reconstruction results for Gaussian deblurring task on ImageNet using latent diffusion priors (Stable Diffusion 2.1 and DiT), compared with DPS}
    \label{fig:latent_gaussian}
\end{figure}

\subsection{Reconstruction using DRP prior}
Figure~\ref{fig:drp_recon} visualizes reconstructions obtained with the DRP prior. Because DRP uses a randomly initialized generator, its reconstructions are noticeably blurrier and contain weaker fine-scale details.

\begin{figure}[!t]
    \centering
    \includegraphics[width=0.8\linewidth]{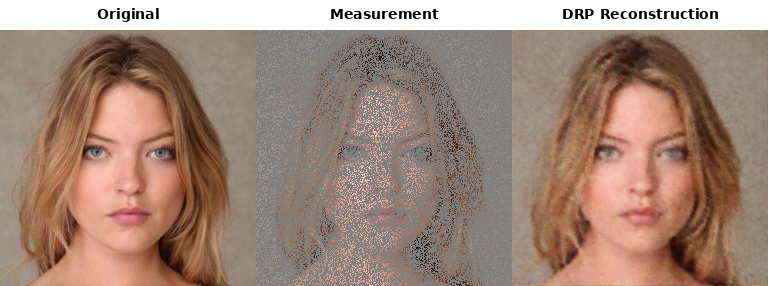}
    \caption{DRP reconstruction result for random inpainting.}
    \label{fig:drp_recon}
\end{figure}

\subsection{Reconstruction Failure Case}
Figures~\ref{fig:failure_box_0.6} and \ref{fig:failure_super_res_16} visualize box inpainting and super-resolution results for DPS and our method. Our method is shown with both in-domain and out-of-domain weak priors. The failure mode is evident from the reconstruction inconsistencies.

\begin{figure}[!t]
    \centering
    \includegraphics[width=0.8\linewidth]{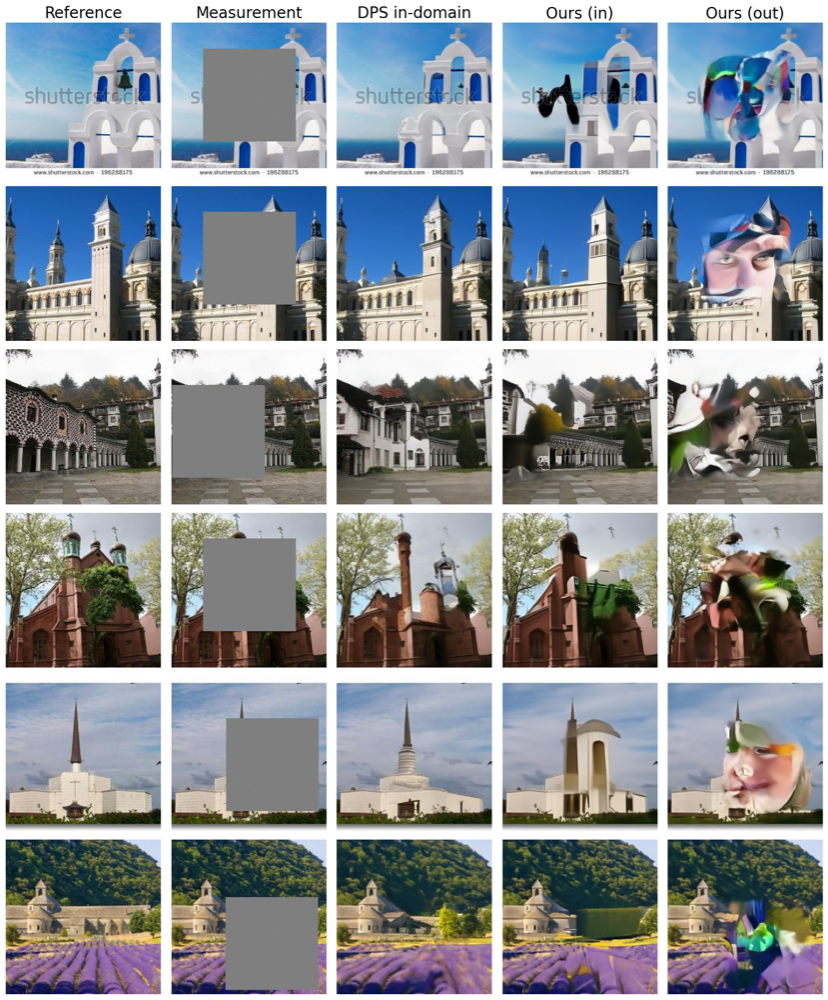}
    \caption{Sample qualitative reconstruction results for box inpainting with a
$0.6 \times 0.6$ mask. DPS and Ours (in) uses Church pretrained model, Ours (out) uses CelebA pretrained model.}
    \label{fig:failure_box_0.6}
\end{figure}

\begin{figure}[!t]
    \centering
    \includegraphics[width=0.8\linewidth]{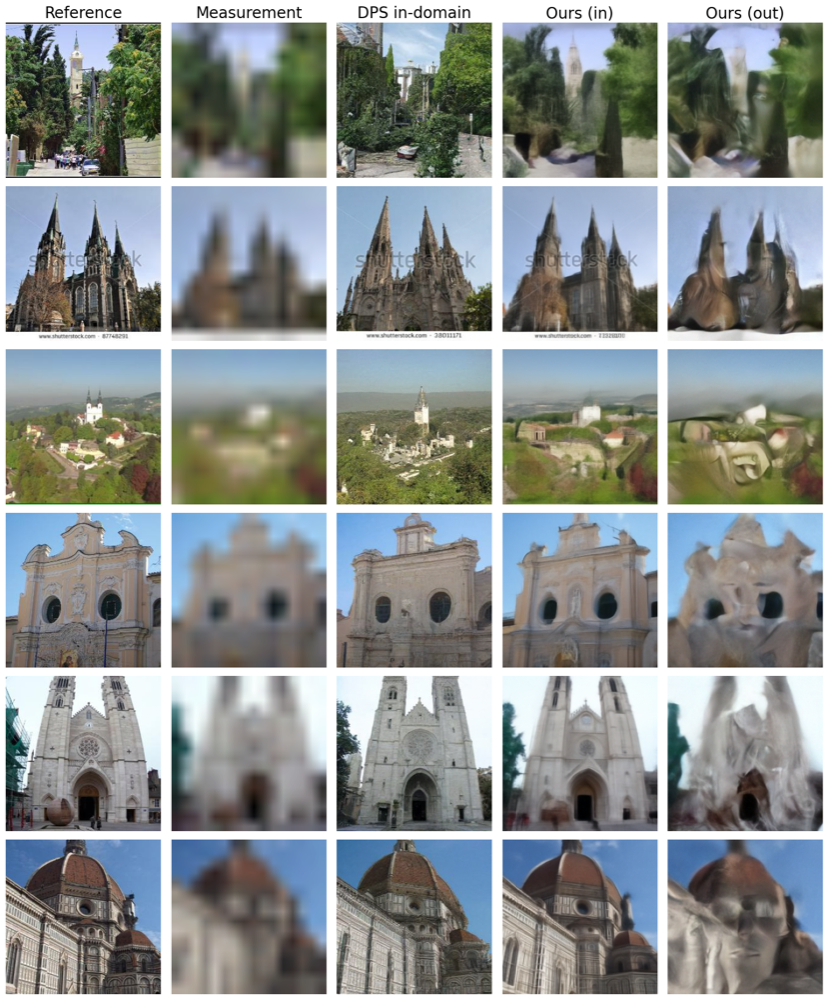}
    \caption{Sample qualitative reconstruction results for super-resolution with x16 ratio. DPS and Ours (in) uses Church pretrained model, Ours (out) uses CelebA pretrained model.}
    \label{fig:failure_super_res_16}
\end{figure}

\end{document}